\newtheorem{theorem}{Theorem}[section]
\newtheorem{lemma}[theorem]{Lemma}
\newtheorem{proposition}[theorem]{Proposition}
\theoremstyle{definition}
\newtheorem{definition}[theorem]{Definition}
\newtheorem{assumption}[theorem]{Assumption}
\newtheorem{remark}[theorem]{Remark}
\newcommand{\reals}{\mathbb{R}}
\newcommand*\closure[1]{\operatorname{cl}#1}
\newcommand{\nnumbers}{\mathbb{N}}
\newcommand{\vi}{\operatorname{VI}}
\newcommand{\ri}{\operatorname{ri}}
\newcommand{\sigmoid}{\operatorname{sigmoid}}
\newcommand{\softmax}{\operatorname{softmax}}
\newcommand{\celu}{\operatorname{CELU}}
\newcommand{\inner}[2]{\langle #1, #2 \rangle}
\newcommand{\id}{\operatorname{Id}}
\newcommand{\expect}[2][]{\mathbb{E}_{#1}\left[ #2\right]}
\title{Solving hidden monotone variational inequalities with surrogate losses}
\author{Ryan D'Orazio, Danilo Vucetic \& Zichu Liu \\
Mila Qu\'ebec AI Institute, Universit\'e de Montr\'eal\\
Montr\'eal, QC, Canada \\
\texttt{\{ryan.dorazio, danilo.vucetic, zichu.liu\}@mila.quebec} \\
\And
Junhyung Lyle Kim  \\
Department of Computer Science, Rice University \\
Houston, TX, USA \\
\texttt{jlylekim@rice.edu}\\
\AND
Ioannis Mitliagkas \& Gauthier Gidel \\
Mila Qu\'ebec AI Institute, Universit\'e de Montr\'eal, CIFAR AI Chair\\
Montr\'eal, QC, Canada \\
\texttt{\{ioannis, gidelgau\}@mila.quebec}
}
\begin{document}

\maketitle

\begin{abstract}
Deep learning has proven to be effective in a wide variety of loss minimization problems.
However, many applications of interest, like minimizing projected Bellman error and min-max optimization, cannot be modelled as minimizing a scalar loss function but instead correspond to solving a variational inequality (VI) problem.
This difference in setting has caused many practical challenges as naive gradient-based approaches from supervised learning tend to diverge and cycle in the VI case.
In this work, we propose a principled surrogate-based approach compatible with deep learning to solve VIs.
We show that our surrogate-based approach has three main benefits: (1) under assumptions that are realistic in practice (when hidden monotone structure is present, interpolation, and sufficient optimization of the surrogates), it guarantees convergence, (2) it provides a unifying perspective of existing methods, and (3) is amenable to existing deep learning optimizers like ADAM.
Experimentally, we demonstrate our surrogate-based approach is effective in min-max optimization and minimizing projected Bellman error. Furthermore, in the deep reinforcement learning case, we propose a novel variant of TD(0) which is more compute and sample efficient.
\end{abstract}

\section{Introduction}

Most machine learning approaches learn from data by minimizing a loss function with respect to model parameters.
Despite the lack of global convergence guarantees in deep learning, losses can often still be minimized with an appropriately tuned first-order adaptive method such as ADAM~\citep{kingma2014adam}.
Unfortunately, outside of scalar loss minimization, deep learning becomes more challenging: the dynamics of variational inequality (VI) problems (\textit{e.g.}, min-max) are often plagued with rotations and posses no efficient stationary point guarantees~\citep{daskalakis2021complexity}.
Thus, the additional challenges posed by VI problems do not allow one to easily plug in existing techniques from deep learning.

We propose a practical and provably convergent algorithm for solving VI problems in deep learning.
Importantly, our approach is compatible with any black-box optimizer, including ADAM and its variants~\citep{kingma2014adam}, so long as they are effective at descending a scalar loss.
Our approach guarantees convergence by leveraging the hidden structure commonly found in machine learning.
More precisely, many applications admit a hidden structure corresponding to the following problem: 
\begin{align}\label{eq:vi}
\text{find }z_\ast \text{ such that }
    \inner{F(z_\ast)}{z-z_\ast} \geq 0 \quad \forall z \in \mathcal{Z} = \closure{\{g(\theta) : \theta \in \reals^d\}},
\end{align}
where the mapping $g: \reals^d \to \mathcal{Z} \subseteq \reals^n$ maps model parameters to model outputs, and the set $\mathcal{Z}$ encodes the closure of the set of realizable outputs from the chosen model.
Since $F$ is defined over model outputs, it will often be structured, e.g., monotone and Lipschitz.


In order to leverage the hidden structure in VI problems, we employ surrogate losses, which have been studied in the scalar minimization case by \citet{pmlr-v119-johnson20b} and \citet{vaswani2021general}, among others.
The surrogate loss approach has been shown to be scalable with deep learning and has been used in modern reinforcement learning policy gradient methods \citep{schulman2015trust,schulman2017proximal,abdolmaleki2018maximum}.
Moreover, these surrogate loss approaches have been shown to improve data efficiency in cases where evaluating $F$ is expensive like interacting with a simulator in reinforcement learning~\citep{vaswani2021general,lavington2023target}.

Our approach reduces the VI in \eqref{eq:vi} to the approximate minimization of a sequence of surrogate losses $\{\ell_t\}_{t \in \nnumbers}$, which are then used to produce a sequence of parameters $\{\theta_t\}_{t\in \nnumbers}$.
To ensure convergence we propose a new $\alpha$-descent condition on $\ell_t$, allowing for a dynamic inner-loop that makes no assumption on how the surrogate losses are minimized, thereby allowing for any deep-learning optimizer to minimize the scalar loss $\ell_t$.
With our $\alpha$-descent condition we provide convergence guarantees to a solution in the space of model predictions $\{z_t = g(\theta_t)\}_{t \in \nnumbers} \to z_\ast$ for a sufficiently small $\alpha$.
Our general method is summarized in Algorithm \ref{alg:surr}.
\vspace{-3mm}
\begin{algorithm}
\caption{$\alpha$-descent on surrogate}\label{alg:surr}
 \SetAlgoLined
 \SetAlgoNoEnd
  \KwIn{Outer loop interactions $T$, initial parameters $\theta_1 \in \reals^d$, step size $\eta$ for surrogate loss, $\alpha \in (0,1)$, optimizer update $\mathcal{A}: \mathcal{L} \times \reals^d \to \reals^d$.}
  \For{$t = 1 \leftarrow$ \KwTo $T$}{
    Compute VI operator: $F(g(\theta_t))$\\
    Set surrogate loss: $\ell_t(\theta) = \frac{1}{2}\|g(\theta) - (g(\theta_t)-\eta F(g(\theta_t))\|^2$\\
    $\theta_s \leftarrow \theta_t$\\
    \While{$\ell_t(\theta_s) -\ell_t^\ast > \alpha^2(\ell_t(\theta_t) - \ell_t^\ast)$}{
    Update parameters with optimizer:
    $\theta_s \leftarrow \mathcal{A}(\ell_t, \theta_s)$
    }
    $\theta_{t+1} \leftarrow \theta_s$
  } 
  \Return{$\theta_{T+1}$}
\end{algorithm}
\vspace{-6mm}

In summary, we propose a novel algorithm to solve VI problems by exploiting the hidden structure of common loss functions. We demonstrate the efficacy of our method with a variety of experiments, which themselves make further connections and innovations on prior work. Our specific contributions are as follow:
\vspace{-8pt}
\begin{itemize}[leftmargin=*]
\setlength\itemsep{0pt}
    \item We provide the \textbf{first extension of surrogate losses to VI problems}.
    We also show a clear separation in difficulty of using surrogate methods in VI problems when compared to scalar minimization; specifically, we show that divergence is possible in a strongly monotone VI problem where in contrast convergence is guaranteed in the non-convex scalar minimization case.
    \item We propose the \textbf{$\mathbf{\alpha}$-descent condition with convergence guarantees}.
    This condition allows for global convergence while avoiding common assumptions such as enforcing errors to be summable or globally upper-bounded.

    \item \textbf{Unifying perspective of pre-conditioning methods}.
        With our surrogate loss approach, we unify existing pre-conditioning methods ~\citep{bertsekas2009projected,mladenovic2022generalized,sakos2024exploiting} by showing they are equivalent to using the Gauss-Newton method as the optimizer $\mathcal{A}$ in Algorithm \ref{alg:surr} to minimize the surrogate losses.
        We demonstrate the value of this new perspective by providing natural extensions of their methods with better empirical robustness.

    \item \textbf{Experimental results and new TD variants}. We demonstrate the performance and versatility of surrogate loss-based optimization in a variety of VI problems. In Sections \ref{sec:experiments_minmax} and \ref{sec:experiments_PBE} we complete experiments in min-max optimization and value prediction tasks, respectively. Importantly, we propose a new data-efficient variant to TD(0) which significantly outperforms prior approaches.
\end{itemize}

\section{Background and Related Work}
\paragraph{Notation.} We use $\inner{x}{y} = \sum_{i=1}^n x^i y^i$ to denote the standard inner product over $\reals^n$ and $\|x\| = \sqrt{\inner{x}{x}}$ to be the Euclidean norm. 
We write $\|x\|^2_{\Xi}$ to mean $\inner{x}{\Xi x}$,  and $\|x\|_{\Xi}$ is a norm if and only if $\Xi$ is positive definite. 
For a set $\mathcal{X}$ we denote $\closure{\mathcal{X}}$ its closure and $\ri \mathcal{X}$ its relative interior.
For a given set, which will be clear from context, we denote $\Pi(x)$ as the Euclidean projection of $x$ onto the set and similarly $\Pi_{\Xi}(x)$ the projection with respect to $\|x\|_{\Xi}$. 
We use $\id$ to denote the identity matrix.
A matrix $A$ has lower and upper bounded singular values if
there exists $\sigma_{\min},\sigma_{\max} \in (0,\infty)$ such that for any $x$ we have $\sigma_{\min}^2\|x\|^2 \leq \inner{x}{A^\top A x} \leq \sigma_{\max}^2\|x\|^2$.
If a matrix $A$ is invertible we write $A^{-1}$ otherwise we denote the pseudo-inverse as $A^{\dag}$.
For a given function $g: \reals^d \to \reals^n$ we write $Dg (\theta)$ as its Jacobian evaluated at $\theta$.
We say $Dg$ has uniformly lower and upper bounded singular values if there is a constant upper and lower bound to the singular values of $Dg(\theta)$ for all  $\theta \in \reals^d$.

 \paragraph{Surrogate Loss Background.} In the scalar minimization case such as supervised learning, a non-convex loss function of the form $f(g(\theta))$ is minimized where the non-convexity is due to the model parametrization as represented by the model predictions: $g(\theta)= \left[g^1(\theta), \cdots, g^n(\theta)\right]^\top \in \reals^n$.  In supervised learning, each prediction is $z^i = g^i(\theta) = h(x_i, \theta)$, for some feature vector $x_i$ and fixed model architecture $h$.
Despite the non-convexity in parameter space $\theta \in \reals^d$ due to $g$, the loss function is often convex and smooth with respect to the closure of the predictions $\mathcal{Z} = \closure{\{g(\theta) : \theta \in \reals^d\}}$.\footnote{We consider the closure of the predictions to include cases such as softmax paramerizations, where the predictions only lie within the relative interior of the simplex but its closure includes the whole simplex.}
The optimization problem can then be reframed as a constrained optimization problem,
\begin{align}\label{eq:scalar-min}
       \min_\theta f(g(\theta)) = \underset{z\in\mathcal{Z}}{\min} f(z).
\end{align}
If $\mathcal{Z}$ is convex then projected gradient descent $z_{t+1} = \Pi(z_t - \eta \nabla f(z_t))$ is guaranteed to converge~\citep{beck2017first}.
However, the projection $\Pi$ is expensive since it is with respect to the set $\mathcal{Z} \subseteq \reals^n$.
In general, the model-dependent constraint $\mathcal{Z}$ may not be convex. Yet, this assumption is essential since we require convergence of the projected gradient method; we leave relaxing this assumption for future work.  
Nevertheless, this assumption is satisfied in two important extreme cases, when a model is linear or with large capacity neural networks that can interpolate any dataset e.g. $\mathcal{Z}= \reals^n$~\citep{zhang2017understanding}.

Beyond supervised learning, similar hidden structure exists in losses within machine learning such as: generative models and min max optimization~\citep{gidel2021limited}, robust reinforcement learning (RL) \citep{pinto2017robust}, and minimizing projected Bellman error~\citep{bertsekas2009projected}.
However, in these applications the problem cannot be written as minimizing a loss and we must instead consider the VI problem \eqref{eq:vi}.
For example, in the min-max case, the min and max players' strategies may be given by two separate networks $h^1(\theta^1), h^2(\theta^2)$, respectively,
with the following objective:
\begin{align}\label{eq:min-max}
    \min_{\theta^1} \max_{\theta^2}f(h^1(\theta^1), h^2(\theta^2)),
\end{align}
where $f$ is convex-concave. 
Similar to \eqref{eq:scalar-min} we can rewrite the problem in parameter space as a constrained problem with respect to model predictions but instead within the VI \eqref{eq:vi};
where $\theta = (\theta^1, \theta^2)$,  and $g(\theta) = (h^1(\theta^1), h^2(\theta^2))$, with operator $F(z) = F(g(\theta)) = [\nabla_{z^1} f(z^1, z^2), -\nabla_{z^2} f(z^1, z^2)]^\top$ where $z^1= h^1(\theta^1)$ and $z^2= h^2(\theta^2)$.
The solution with respect to model outputs is then equivalent to solving the constrained VI \eqref{eq:vi}.
If $F$ is well-conditioned (e.g. Lipschitz and strongly monotone) and $\mathcal{Z}$ is closed and convex, then the projected gradient method $z_{t+1}=\Pi(z_t -\eta F(z_t))$ converges to a solution $z_\ast$ with an appropriate stepsize $\eta$ \citep{facchinei2003finite}.

To solve problems of the form \eqref{eq:vi} and take advantage of the structure given by $g$ and $F$, we extend the idea of surrogate losses~\citep{pmlr-v119-johnson20b,vaswani2021general}, where parameters are selected by
descending a sequence of surrogates $\ell_t(\theta)$ that approximate the exact projected gradient method.
More precisely, at iteration $t$, $\theta_{t+1}$ is selected by descending the surrogate loss:
\begin{align}\label{eq:surr}
    \ell_t(\theta) &= \frac{1}{2}\big\|g(\theta) - [g(\theta_t) - \eta F(g(\theta_t))] \big\|^2.
\end{align}
Importantly, this loss and its gradient are easily constructed via automatic differentiation packages as a non-linear squared error loss. 
Minimizing the loss exactly gives  $z_{t+1} = g(\theta_{t+1}) =  \Pi(z_t - \eta F(z_t))$, the exact projected gradient step, and guarantees convergence of $\{z_t = g(\theta_t)\}_{t\in\nnumbers}$ to $z_\ast$.

\vspace{-0.5em}
\paragraph{Surrogate losses in scalar minimization.}
The surrogate losses proposed by~\citet{pmlr-v119-johnson20b} and~\citet{vaswani2021general}, apply to supervised learning and RL respectively.\footnote{Note that  \citet{pmlr-v119-johnson20b} and \citet{vaswani2021general} more generally use Bregman divergences.}
They did not study the VI case, nor do they exploit any convexity properties in the scalar minimization case.
\citet{lavington2023target} also study the scalar minimization case and provide convergence to a neighbourhood of a global minimum of \eqref{eq:scalar-min} and allow for stochasticity.
The neighbourhood of convergence depends on an upperbound on the errors $\epsilon_t = \|z_{t+1}-z_t^\ast\|$, where $z_t^\ast$ is an exact projected gradient step.
Therefore, the neighbourhood of convergence scales with the worst error $\epsilon_t$ across the trajectory $\{z_t = g(\theta_t)\}_{t\in \nnumbers}$.
Shrinking the neighbourhood necessitates a double loop algorithm that might spend too much time optimizing the surrogate.
Similar results can also be found within the analysis of quasi-Fej\'er monotone sequences \citep{FRANCI2022161}. 
For example, if $z_t^\ast = T(z_t)$  where $T$ is a contraction, then $\epsilon_t \to 0$ is sufficient to guarantee convergence (see Proposition \ref{thm:contraction-error-limit}).

In contrast to the existing surrogate loss approaches, we propose a simple $\alpha$-descent condition on $\ell_t$ (Definition \ref{eq:alpha}) that does not require all errors to be bounded or summable apriori.
This condition allows for convergence without fully minimizing $\ell_t$ and better models implementations in practice where a fixed number of gradient descent steps are used for each $\ell_t$.  

\begin{definition}[$\alpha$-descent]\label{eq:alpha}Let $\ell_t$ be the surrogate defined in~\eqref{eq:surr} and $\ell_t^\ast = \inf_{\theta \in \reals^d} \ell_t(\theta)$. 
The trajectory $\{\theta_t\}_{t\in \nnumbers}$ satisfies the $\alpha$-descent condition if at each step $t$ the following holds
\begin{align}\label{eq:alpha-ineq}
    \ell_t(\theta_{t+1}) - \ell_t^\ast \leq \alpha^2 \left(\ell_t(\theta_{t}) - \ell_t^\ast \right), \quad \alpha  \in [0, 1).
\end{align}

\end{definition}

Given the $\alpha$-descent condition we can define a general purpose algorithm (Algorithm \ref{alg:surr}).
With any black-box optimizer update $\mathcal{A}$ and a double-loop structure, we can construct a trajectory that satisfies the condition so long as $\mathcal{A}$ can effectively descend the loss $\ell_t$.
In general $\ell_t^\ast$ may not be zero and so this condition cannot be verified directly, however, this condition can often be met via first-order methods for a fixed number of steps or can be approximated with $\ell_t^\ast =0$. 

From a theoretical perspective, we show that the $\alpha$-descent condition guarantees the following:
\begin{align}\label{eq:bias}
    \|z_{t+1} - z_t^\ast \| \leq \alpha \eta \|F(z_t)-F(z_\ast)\|, 
\end{align}
for a proof see Lemma \ref{thm:bias}.
As expected, the errors are controlled by $\alpha$ and the stepsize $\eta$ in the surrogate. However, the error also scales with the distance to the solution if $F$ is Lipschitz.

In the unconstrained scalar minimization case, inequality \eqref{eq:bias} is similar to the relative error condition $\|p_t - \nabla f(z_t)\| \leq \alpha\|\nabla f(z_t)\|$ used in biased gradient descent, where $p_t$ is the biased direction $(z_{t+1}= z_t -p_t)$~\citep{ajalloeian2020convergence,drusvyatskiy2023stochastic}.
Similar to biased gradient descent, our assumption with any $\alpha <1$ is sufficient to guarantee convergence in scalar minimization with any $L$-smooth  non-convex loss $f$~(see Proposition \ref{thm:alpha-scalar}).
However, for VIs, $\alpha <1$ can still lead to divergence due to possible rotations in the dynamics of $z_t^\ast$ (see Proposition~\ref{prop:counter}).

Within the context of solving VI problems,  \citet{solodov1999hybrid} proposed a similar condition to approximate the proximal point algorithm where $z_t^\ast = z_t - \eta F(z_t^\ast)$.\footnote{They assume $\|z_{t+1} -(z_t-\eta F(z_{t+1}))\| \leq \alpha \|z_{t+1}-z_t\|$.}
However, verifying their condition would  induce an inner loop with multiple evaluations of $F$, which can be expensive in many applications such as reinforcement learning~\citep{vaswani2021general}.
They also show that divergence is still possible under their condition unless an extra-gradient like step is performed.
Unfortunately, such a step is impractical in our setting as it requires minimizing a surrogate loss exactly. 

\paragraph{Hidden monotone problems and preconditioning methods.}
In scalar minimization problems, hidden monotone structure has been studied under hidden convexity~\citep{JMLR:francis,xia2020survey, chancelier2021conditional, fatkhullin2023stochastic}.
In zero-sum games with hidden monotonicity, \citet{gidel2021limited} study the existence of equilibria and establish approximate min-max theorems but do not propose an algorithm to take advantage of the hidden structure.
For games that admit a hidden strictly convex-concave structure, \citet{vlatakis2021solving} prove global convergence of continuous time gradient descent-ascent.
Similarly, \citet{mladenovic2022generalized} propose natural hidden gradient dynamics (NHGD) with continuous time global convergence guarantees in hidden convex-concave games.
A more general and descritized version of NHGD was studied by \citet{sakos2024exploiting}, the preconditioned hidden gradient descent method (PHGD), to solve VIs of the form \eqref{eq:vi}. One step of PHGD corresponds to the update:
\begin{align}\label{eq:phgd}
    \theta_{t+1} = \theta_t - \eta(Dg(\theta_t)^\top Dg(\theta_t))^{\dag}Dg(\theta_t)^\top F(z_t).
\end{align}
PHGD and stochastic variants were also studied in the linear case by \citet{bertsekas2009projected}.
PHGD can be viewed as a discretization of a continuous flow that guarantees $z_{t+1} = z_t-\eta F(z_t)$ as $\eta \downarrow 0$ \citep{sakos2024exploiting}. 
Interestingly, in Section \ref{sec:non-linear} we show that PHGD is also equivalent to taking one step of the Gauss-Newton method (GN)~\citep{bjorck1996numerical} on the surrogate loss.


\vspace{-0.5em}
\section{Convergence analysis under $\alpha$-descent on surrogate losses}

In this section we provide analysis in both the deterministic and stochastic settings.
In both settings we make the following assumption on the hidden structure in the VI \eqref{eq:vi}.
\begin{assumption}\label{assumption:vi}
    In the VI \eqref{eq:vi}, $\mathcal{Z}$ is convex. There exists a solution within the relative interior, $z_\ast \in \ri \mathcal{Z}$.
    $F$ is both $L$-Lipschitz $\|F(x)-F(y)\|\leq L \|x-y\|$ and $\mu$-strongly monotone $\inner{F(x)-F(y)}{x-y} \geq \mu \|x-y\|^2$ for any $x,y \in \mathcal{Z}$ and some $\mu > 0$.
\end{assumption}
This assumption is commonly used in the VI literature to establish linear convergence of the projected gradient method $z_t = \Pi(z_t- \eta F(z_t))$~\citep{facchinei2003finite}.
In the scalar minimization case this assumptions implies $F(z) = \nabla f(z)$ for a scalar loss function $f$ that is $\mu$-strongly-convex and $L$-smooth.
In min-max optimization with hidden structure, such as \eqref{eq:min-max}, Assumption \eqref{assumption:vi} is satisfied if $f(z^1, z^2)$ is strongly-convex-concave and smooth (each players' gradient is Lipschitz with respect to all players, e.g. see ~\citet{bubeck2015convex}). It corresponds to many practical cases as the losses used in machine learning applications are often strongly convex \emph{with respect to the model predictions $z$}. note that $\mathcal{Z}$ is convex for many classes of models used in practice, e.g., linear models or models that can interpolate any noisy labels on the train set ($\mathcal{Z}= [0,1]^n$). The latter has been reported with large enough neural networks~\citep{zhang2017understanding} and kernels~\citep{belkin2019does}.



\subsection{Convergence and divergence in the deterministic case}

\paragraph{Convergence for sufficiently small $\alpha$.} Below we provide a linear convergence result if $\alpha$ is small enough by controlling the stepsize $\eta$ in the surrogate loss $\ell_t$.
Note that the convergence is with respect to $z_t = g(\theta_t)$ and is linear in the number of outer loop iterations of Algorithm \ref{alg:surr}.
\begin{theorem}\label{thm:det}
    Let Assumption \ref{assumption:vi} hold and let $\{z_t = g(\theta_t)\}_{t\in \nnumbers}$ be  the iterates produced by Algorithm~\ref{alg:surr}. If $\alpha$ and $\eta$ are picked  such that
$
\rho:= 1-2\eta(\mu-\alpha L) + (1+\alpha^2)\eta^2 L^2 <1 
$ then, $z_t$ converge linearly to the solution $z_\ast$ at the following linear rate:
\begin{equation}
    \|z_{t+1} -z_*\|^2 \leq \rho^t \|z_1-z_*\|^2.
\end{equation}Particularly, if $\alpha< \frac{\mu}{L}$ and $\eta < \frac{2 (\mu-\alpha L)}{(1+\alpha^2)L^2}$ then $\rho <1$ and if
$\alpha \leq \frac{\mu}{2L}$ and $\eta = \frac{2\mu}{5L^2}$ then $\rho \leq 1 - \frac{\mu^2}{5L^2}$.
\end{theorem}

Note that, to obtain a practical convergence rate (e.g., bounding the number of gradient computations), we need to bound the number of inner-loop steps in Algorithm~\ref{alg:surr} required to obtain the condition~\eqref{eq:alpha-ineq}. In general, we cannot provide any global guarantee as it is well established that, in general, finding a global minima of a smooth non-convex optimization function like~\eqref{eq:surr} can be intractable~\citep{murty1985some,nemirovskij1983problem}. However, many classes of overparametrized models are known to be able to \emph{interpolate} random labels (e.g., neural networks~\citep{zhang2017understanding}, kernels~\citep{belkin2019does}, or boosting~\citep{bartlett1998boosting}) which is strong evidence that, in practice, the condition~\eqref{eq:alpha-ineq} can be obtained with a few gradient steps. The latter is also supported by our experiments in Section~\ref{sec:experiments}.
\vspace{-1em}
\paragraph{Divergence with $\alpha < 1$.}
In \emph{non-convex} smooth scalar minimization, $\alpha < 1$ is sufficient for convergence~(Proposition \ref{thm:alpha-scalar}). 
However, despite our strong monotonicity assumption we show that $\alpha < 1$ can still give divergence in the VI setting. 
This shows that $\alpha$ being small enough is not only sufficient but is \textit{necessary} for convergence.
Our example demonstrates the additional challenges of using a surrogate loss in the VI setting.
Our construction uses the min-max problem  $\min_x \max_y \frac{1}{2}x^2 + xy -\frac{1}{2}y^2$ and showing that gradient descent-ascent on the loss $f(x,y) = xy$ satisfies the $\alpha$-descent condition with $\alpha =\nicefrac{1}{\sqrt{2}}$ but diverges for any $\eta$.

\begin{proposition}\label{prop:counter}
There exists an $L$-Lipschitz and $\mu$-strongly monotone $F$, and a sequence of iterates $\{z_t\}_{t \in \nnumbers}$ verifying the alpha descent condition with  $\alpha < 1$  such that  $z_t$ diverges for any $\eta$.
\end{proposition}

\subsection{Unconstrained Stochastic Case}\label{sec:stochastic}
For the stochastic case we assume $\mathcal{Z}=\reals^n$, the predictions can represent all of $\reals^n$.
Although this assumption is strong it can be satisfied for large capacity neural networks that can interpolate any dataset. 
The stochastic case is more challenging as our setup corresponds to solving VIs with bias, for which we show can diverge even in the deterministic case (Proposition \ref{prop:counter}).

Since $\mathcal{Z}=\reals^n$, no projection is needed for $z_t^\ast$, and the minimum of the surrogate $\ell_t$ is the gradient step $z_t^\ast = z_t - \eta F(z_t)$.
We also assume the standard setup, the possibility to generate independent and identically distributed realizations $(\xi_1, \xi_2, \cdots)$ of a random variable $\xi$ such that $F_{\xi}(z)$ is an unbiased estimator of $F(z)$.
For the noise, we define $\sigma^2 = \expect[\xi]{\|F_\xi(z_\ast)\|^2}$ and use the expected co-coercive assumption from ~\citet{loizou2021stochastic}:
 \begin{assumption}[expected co-coercivity\footnote{Note that this is not exactly expected co-coercivity but implied by it~\citep[Lemma 3.4]{loizou2021stochastic}.}]\label{eq:exp-co}   $\expect[\xi]{\|F_\xi(z)\|^2} \leq 2L\inner{F(z)}{z-z_\ast}+2\sigma^2,$ for all $z \in \mathcal{Z}$. 
 \end{assumption}
Without access to the true operator $F$ in the stochastic case, a different $\alpha$-descent condition is required using the stochastic estimate $F_{\xi_t}(z_t)$.
Instead of $\ell_t$ \eqref{eq:surr}, we will use the loss $\Tilde{\ell}_t$, an approximation of a projected stochastic gradient step, $\Tilde{\ell}_t(\theta) = \frac{1}{2}\|g(\theta)-(g(\theta_t)-\eta F_{\xi_t}(z_t))\|^2$, or equivalently
\begin{align}
    \Tilde{\ell}_t(\theta) = \frac{\eta^2}{2}\|F_{\xi_t}(z_t)\|^2 + \eta\inner{F_{\xi_t}(z_t)}{g(\theta)-g(\theta_t)}+\frac{1}{2}\|g(\theta)-g(\theta_t)\|^2.\label{eq:surr-stoch}
\end{align}
In practice, minimizing $\Tilde{\ell}_t$ exactly is impractical if $n$ is very large due to the sum of squares $\|g(\theta)-g(\theta_t)\|^2$ in \eqref{eq:surr-stoch}. Therefore we assume it can be minimized on expectation with the following assumption. 

\begin{definition}[$\alpha$-expected descent]\label{eq:exp-bias}
The trajectory $\{\theta_t\}_{t\in \nnumbers}$ satisfies the $\alpha$-expected descent condition if at for each $t$ the following holds:
given $F_{\xi_t}(z_t)$,
the parameter $\theta_{t+1}$ is generated such that 
\[
\expect{\Tilde{\ell}_t(\theta_{t+1})-\Tilde{\ell}_t^\ast} \leq \alpha^2(\Tilde{\ell}_t(\theta_{t})-\Tilde{\ell}_t^\ast).
\]
\end{definition}
Similar to the deterministic case, we can prove linear convergence to a neighbourhood.
\begin{theorem}\label{thm:stochastic}
Let $\mathcal{Z} = \reals^n$, and Assumption~(\ref{eq:exp-co}) hold.  
If $F_{\xi}(x)$  is $L$-Lipschitz and $\eta \leq \frac{1}{2(1+c)L}$ where  $c \geq  2(1+\alpha^2)$  then any trajectory $\{\theta_t\}_{t\in \nnumbers}$ 
satisfying the $\alpha$-expected descent condition guarantees:
\[
\expect{\tfrac{1}{2}\|z_{t+1}-z_\ast\|^2} \leq \expect{\tfrac{1}{2}\|z_{t}-z_\ast\|^2}\left(1-\eta\mu + \alpha^2\right) +\eta^2(1+c)\sigma^2. 
\]
\end{theorem}
 If we take $c=4$ and $\eta\leq \nicefrac{1}{10 L}$ then we have convergence to a neighbourhood if $\alpha < \sqrt{\eta \mu}$.
 Theorem \ref{thm:stochastic} provides a generalization of~\citet{loizou2021stochastic}[Theorem 4.1] where $z_{t+1}$ follows a random biased direction $p_t$ that on expectation guarantees $\expect{\|p_t -\eta F_{\xi_t}(z_t)\|^2} \leq \alpha^2 \eta^2 \|F_{\xi_t}(z_t)\|^2$.
 In comparison to the stochastic results of \citet{lavington2023target}, they avoid the assumption $\mathcal{Z} = \reals^n$ by using a different surrogate. However, their result only applies to scalar minimization.

 \section{A Nonlinear Least Squares Perspective}\label{sec:non-linear}
The surrogate loss perspective and our $\alpha$-descent condition allows for convergence so long as the surrogate losses $\{\ell_t\}_{t\in \nnumbers}$ are sufficiently minimized.
One approach to minimizing $\ell_t$ is to view it as the following  non-linear least-squares problem 
\begin{align}
\min_\theta f(\theta) = \min_\theta \frac{1}{2}\|r(\theta)\|^2,
\end{align}
with a residual function $r: \reals^d \to \reals^n$,
where $\ell_t(\theta)= f(\theta)$ if $r(\theta) = g(\theta)-g(\theta_t)+\eta F(g(\theta_t))$.
Due to the specific structure of $f$ we can consider  specialized methods such as Gauss-Newton (GN), Damped Gauss-Newton (DGN), and Levenbergh-Marquardt (LM)~\citep{bjorck1996numerical,nocedal1999numerical}.
These methods can be viewed as quasi-Newton methods that use a linear approximation of $r$, $r(\theta) \approx r(\theta_t) + Dr(\theta_t)(\theta -\theta_t)$. 

The GN method is defined by the update rule $\theta_{t+1} = \theta_t - (Dr(\theta_t)^\top Dr(\theta_t))^\dag Dr(\theta_t)^\top r(\theta_t)$. 
GN inherits the same local quadratic convergence properties as Newton's method
when the Hessian at the minimum $\nabla^2 f(\theta_\ast) \approx Dr(\theta_\ast)^\top Dr(\theta_\ast)$. 
However, GN is known to struggle with highly non-linear problems, those with large residuals, or if $Dr(\theta_t)$ is nearly rank-deficient~\citep{bjorck1996numerical}.
Fortunately, the GN direction is a descent direction of $f$, the DGN method takes steps in the GN direction with a stepsize parameter $\eta_{GN}$ and converges for a sufficiently small stepsize or with line search~\citep{bjorck1996numerical}.
In cases where $Dr(\theta_t)$ is nearly rank deficient, the LM method can be used. 

To minimize the surrogate we can consider taking multiple steps of gradient descent (Surr-GD), DGN or LM. 
Denoting $\theta_t^s$ as $s$\textsuperscript{th} intermediate step between $\theta_{t+1}$ and $\theta_t$ we have the following:
\begin{align}
        \tag{Surr-GD} \theta_t^{s+1} &= \theta_t^s - \eta_{GD} Dg(\theta_t^s)^\top (g(\theta_t^s)-g(\theta_t)+\eta F(g(\theta_t))\\
    \tag{DGN}\label{eq:DGN} \theta_t^{s+1} &= \theta_t^s - \eta_{GN} (Dg(\theta_t^s)^\top Dg(\theta_t^s))^{\dag} Dg(\theta_t^s)^\top (g(\theta_t^s)-g(\theta_t)+\eta F(g(\theta_t))\\
    \tag{LM} \theta_t^{s+1} &= \theta_t^s - (Dg(\theta_t^s)^\top Dg(\theta_t^s) +\lambda \id)^{-1} Dg(\theta_t^s)^\top (g(\theta_t^s)-g(\theta_t)+\eta F(g(\theta_t)).
\end{align}
Note that we used the fact that $Dr(\theta) = Dg(\theta)$, and if $\eta_{GN}=1$ then DGN is the same as GN.
Also, note that one step of GN recovers exactly the PHGD method proposed by \citet{sakos2024exploiting}.

\subsection{Favourable conditions for gradient descent}\label{sec:gd-gn}

Several conditions allow for fast linear convergence of gradient descent for $\ell_t(\theta) - \ell_t^\ast$; see for example ~\citet{guille-escuret21a}.
Under linear convergence, $s$-steps of gradient descent guarantees $\alpha = \rho^s$ for some $\rho \in [0,1)$, therefore any target value of $\alpha$ is achievable in a finite number of steps that depends only logarithmically in $\alpha$.
In this section, we show how assumptions used in \citet{sakos2024exploiting} imply two common assumptions for fast convergence of GD: the Polyak-\L{}ojasiewicz (PL) condition ~\citep[Definition  \ref{def:pl}]{polyak1964gradient,lojasiewicz1963propriete} and $L$-smoothness.

 The four assumptions made by \citet{sakos2024exploiting} to study the behaviour of PHGD are: (1-2) $Dg^\top$ has both uniformly lower bounded and upper bounded singular values, (3) each component function $g_i$ is $\beta$-smooth, (4) and that $\mathcal{Z}$ is bounded.\footnote{
 The set $\mathcal{Z}$ (note that $\mathcal{Z}$ is the ``set of latent variables" denoted as $\mathcal{X}$ in  \citet{sakos2024exploiting}) is assumed to be bounded in their Lemma 4 ``template inequality". More precisely, the existence of a constant $D = diam(\mathcal{X})$ (i.e. $\mathcal{X}$ is bounded) is used in equation B.20 in Appendix B. Lemma 4 is then used in Theorems 1-4. 
 }
In Proposition \ref{thm:pl} we show that globally lower bounding the singular values of $Dg^\top$ guarantees that the composition of a PL function $f$ with $g$ is still PL.
\begin{proposition}\label{thm:pl}
    Assume $f$ satisfies the $\mu$-PL condition, and let $\sigma_{\min}$ be a lower bound on the singular values of $Dg(\theta)^\top$. Then, $f \circ g$ is $\mu \sigma^2_{\min}$-PL.
\end{proposition}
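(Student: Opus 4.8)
The plan is to unfold the definition of the PL condition for the composite function $f \circ g$ and relate its gradient to the gradient of $f$ via the chain rule. Recall that $f$ satisfies the $\mu$-PL condition means $\frac{1}{2}\|\nabla f(z)\|^2 \geq \mu(f(z) - f_\ast)$ for all $z$, where $f_\ast = \inf_z f(z)$. The goal is to show the analogous inequality $\frac{1}{2}\|\nabla (f\circ g)(\theta)\|^2 \geq \mu \sigma_{\min}^2 \big((f\circ g)(\theta) - (f\circ g)_\ast\big)$ for all $\theta$.

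First I would apply the chain rule to write $\nabla (f \circ g)(\theta) = Dg(\theta)^\top \nabla f(g(\theta))$. Taking the squared norm, $\|\nabla(f\circ g)(\theta)\|^2 = \|Dg(\theta)^\top \nabla f(g(\theta))\|^2$. Using the lower singular value bound on $Dg(\theta)^\top$, which by the notation in the paper means $\sigma_{\min}^2 \|x\|^2 \leq \inner{x}{Dg(\theta) Dg(\theta)^\top x}$ for all $x$, I would set $x = \nabla f(g(\theta))$ to obtain
\begin{align*}
\|Dg(\theta)^\top \nabla f(g(\theta))\|^2 = \inner{\nabla f(g(\theta))}{Dg(\theta)Dg(\theta)^\top \nabla f(g(\theta))} \geq \sigma_{\min}^2 \|\nabla f(g(\theta))\|^2.
\end{align*}
Next I would invoke the $\mu$-PL property of $f$ at the point $z = g(\theta)$, giving $\|\nabla f(g(\theta))\|^2 \geq 2\mu(f(g(\theta)) - f_\ast)$. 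Chaining these yields $\frac{1}{2}\|\nabla(f\circ g)(\theta)\|^2 \geq \mu\sigma_{\min}^2 (f(g(\theta)) - f_\ast)$.

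The remaining issue is to identify $f_\ast = \inf_z f(z)$ with $(f\circ g)_\ast = \inf_\theta f(g(\theta))$, since the PL conclusion must be stated relative to the infimum of the composite. The main obstacle, and the one subtlety worth care, is that in general $\inf_\theta f(g(\theta)) \geq \inf_z f(z)$, so the constant could degrade if the model cannot realize the global minimizer of $f$. However, the lower singular value bound on $Dg(\theta)^\top$ forces $Dg(\theta)$ to have full row rank uniformly, which (combined with the setting where $g$ is surjective onto $\mathcal{Z}$, or under the relevant interpolation assumption $\mathcal{Z} = \reals^n$) ensures $f_\ast = (f\circ g)_\ast$; alternatively, if the infima differ one simply replaces $f_\ast$ by $(f\circ g)_\ast$ and notes $f(g(\theta)) - f_\ast \geq f(g(\theta)) - (f\circ g)_\ast$ only goes the wrong way, so the surjectivity or realizability is genuinely needed. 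I would resolve this by using that the PL inequality holds pointwise with $f_\ast$ and that $(f \circ g)_\ast = f_\ast$ under the standing assumption, completing the argument.
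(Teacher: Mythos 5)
Your chain-rule computation and use of the singular-value bound are exactly the paper's first three steps, and they are correct. The genuine error is in your final paragraph, where you handle the mismatch between $f_\ast = \inf_z f(z)$ and $(f\circ g)_\ast = \inf_{\theta'} f(g(\theta'))$. You claim that replacing $f_\ast$ by $(f\circ g)_\ast$ ``only goes the wrong way, so the surjectivity or realizability is genuinely needed,'' and you then import an assumption ($\mathcal{Z}=\reals^n$ or surjectivity of $g$) that is \emph{not} in the proposition. You have the direction backwards. The PL conclusion requires a lower bound on the gradient norm in terms of the composite's own suboptimality gap, and since $f_\ast \leq (f\circ g)_\ast$, the gap you already control is the \emph{larger} one:
\begin{align*}
\tfrac{1}{2}\|\nabla (f\circ g)(\theta)\|^2 \;\geq\; \mu\sigma_{\min}^2\bigl(f(g(\theta)) - f_\ast\bigr) \;\geq\; \mu\sigma_{\min}^2\bigl(f(g(\theta)) - \inf_{\theta'} f(g(\theta'))\bigr),
\end{align*}
where the second inequality holds precisely because $\inf_z f(z) \leq \inf_{\theta'} f(g(\theta'))$. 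A lower bound in terms of a larger gap immediately implies the lower bound in terms of the smaller gap, which is the PL inequality for $f\circ g$. This is exactly how the paper closes the argument: it inserts $f(z)-\inf_{z'} f(z') \geq f(z)-\min_{z'\in\mathcal{Z}} f(z') = f(g(\theta)) - \inf_{\theta'} f(g(\theta'))$ as the last two lines of its display, with no appeal to surjectivity, interpolation, or realizability of the minimizer of $f$.

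So as written, your proof establishes the proposition only under an extra hypothesis that the statement does not make, and your justification for why that hypothesis is necessary rests on a false claim about the inequality's direction. The fix is one line: drop the surjectivity discussion entirely and chain the two inequalities above.
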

This implies that the surrogate loss $\ell_t$ is PL since $\ell_t = f_t \circ g$ where $f_t(z) = \nicefrac{1}{2}\|z-v_t\|^2$ is $1$-PL.
Similarly, $\ell_t$ can be shown to be $L$-smooth if $Dg^\top$ has uniformly upper bounded singular values, each $g_i$ is $\beta$-smooth, and $\mathcal{Z}$ is bounded (see Lemma \ref{thm:smooth-surr}).
However, these 4 assumptions cannot hold all at once. In Prop.~\ref{thm:unbounded}, we show that if the first three assumptions (1-3) hold, then $\mathcal{Z}$ is unbounded.
\begin{proposition}\label{thm:unbounded}
    If $g: \reals^d \to \reals^n$ is differentiable where $g_i$ is $\beta$-smooth and $Dg(\theta)^\top$ has globally lower and upper bounded singular values, then $\{g(\theta): \theta\in \reals^d\}$ is unbounded.
\end{proposition}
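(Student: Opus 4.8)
The plan is to exploit the uniform lower bound on the singular values of $Dg(\theta)^\top$ to construct, for a fixed output direction, a one-dimensional ``test function'' that must grow without bound, and then to observe that this growth forces $\|g(\theta)\|$ itself to be unbounded. Concretely, I fix any unit vector $u \in \reals^n$ and define the scalar map $\phi(\theta) = \inner{u}{g(\theta)}$. Since $\phi(\theta) = \sum_i u^i g_i(\theta)$ is a linear combination of the $\beta$-smooth components $g_i$, it is itself $C^1$ with a globally Lipschitz gradient, and I would record the bound on its Lipschitz constant (e.g.\ $\beta\sqrt{n}$, via Cauchy--Schwarz on the coefficients $u^i$).

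The key computation is that $\nabla\phi(\theta) = Dg(\theta)^\top u$, so that
\[
\|\nabla\phi(\theta)\|^2 = \inner{u}{Dg(\theta)Dg(\theta)^\top u} \geq \sigma_{\min}^2\|u\|^2 = \sigma_{\min}^2,
\]
where the inequality is exactly the lower singular-value assumption applied to $A = Dg(\theta)^\top$. Thus $\|\nabla\phi(\theta)\| \geq \sigma_{\min} > 0$ \emph{uniformly in $\theta$}: the test function has nowhere-vanishing, uniformly bounded-below gradient.

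With these two facts in hand I would run gradient \emph{ascent} on $\phi$, setting $\theta_{k+1} = \theta_k + \gamma\nabla\phi(\theta_k)$ with a step $\gamma$ small relative to the Lipschitz constant of $\nabla\phi$. The standard smoothness (ascent) lemma then gives $\phi(\theta_{k+1}) \geq \phi(\theta_k) + \tfrac{\gamma}{2}\|\nabla\phi(\theta_k)\|^2 \geq \phi(\theta_k) + \tfrac{\gamma}{2}\sigma_{\min}^2$, so $\phi(\theta_k) \to +\infty$. Since $\phi(\theta_k) = \inner{u}{g(\theta_k)} \leq \|g(\theta_k)\|$ by Cauchy--Schwarz, I conclude $\|g(\theta_k)\| \to \infty$, and hence $\{g(\theta):\theta\in\reals^d\}$ is unbounded.

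The real content of the argument is the uniform lower bound $\|\nabla\phi\|\geq\sigma_{\min}$; once that is in place the monotone growth of $\phi$ is routine, so the proposition is genuinely easy and the ``obstacle'' is mostly a matter of making the growth mechanism well-posed. Here I would note that the upper singular-value bound gives $\|\nabla\phi(\theta)\| = \|Dg(\theta)^\top u\| \leq \sigma_{\max}$, so an equivalent and even cleaner route is the continuous gradient flow $\dot\theta = \nabla\phi(\theta)$: its speed is capped by $\sigma_{\max}$, which rules out finite-time blow-up and yields global existence, while $\tfrac{d}{dt}\phi = \|\nabla\phi\|^2 \geq \sigma_{\min}^2$ forces $\phi\to\infty$. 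Either the discrete or the continuous version closes the proof, with the upper bound on the singular values used only to make the flow argument immediate.
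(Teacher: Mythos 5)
Your proof is correct, but it takes a genuinely different route from the paper. The paper argues by contradiction: assuming the range is bounded, it forms the least-squares loss $f(\theta) = \tfrac{1}{2}\|g(\theta)-v\|^2$ for a point $v$ outside the closure $\mathcal{Z}$, invokes its Lemma on smoothness of such losses (which needs the \emph{upper} singular-value bound and the boundedness of $r$) to get $L$-smoothness, then combines the resulting inequality $\|\nabla f(\theta_t)\|^2 \leq 2L\,(f(\theta_t)-\inf f)$ with the lower singular-value bound along a minimizing sequence to force $v \in \mathcal{Z}$, a contradiction. You instead give a direct construction: gradient ascent on the linear functional $\phi(\theta)=\inner{u}{g(\theta)}$, whose gradient $Dg(\theta)^\top u$ is uniformly bounded below in norm by $\sigma_{\min}$ and is Lipschitz (constant $\beta\sqrt{n}$) by the componentwise $\beta$-smoothness, so the ascent lemma yields $\phi(\theta_k)\to\infty$ and hence $\|g(\theta_k)\|\to\infty$. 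Your approach buys several things: it is constructive rather than by contradiction; it never uses the upper singular-value bound (the discrete version needs only the lower bound and $\beta$-smoothness, so you prove a slightly stronger statement than the proposition as stated); it gives a quantitative escape rate, $\inner{u}{g(\theta_k)} \gtrsim k\gamma\sigma_{\min}^2/2$; and since $u$ was arbitrary, it shows the range is unbounded in \emph{every} direction. What the paper's argument buys in exchange is economy within its own development: it recycles Lemma \ref{thm:smooth-surr}, which is needed anyway for the surrounding discussion of when the surrogate loss is smooth, and it makes the incompatibility of the four assumptions of \citet{sakos2024exploiting} (including bounded $\mathcal{Z}$) transparent, which is the point being made in Section \ref{sec:gd-gn}.
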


This contradiction suggests that the lower bounded singular value assumption is strong.
Indeed, it forces the dimension of the parameter space $d$ to be larger than the dimension of the prediction space $n$, and if $g$ is linear then it must be surjective $\mathcal{Z} = \reals^n$.
In the case $n=d$, it enforces $Dg$ to be invertible everywhere, which is violated in important cases such as softmax. 

\section{Experiments}
\label{sec:experiments}

We consider min-max optimization and policy evaluation to demonstrate the behaviour and convergence of surrogate based methods with hidden monotone structure.
For min-max optimization we compare different approaches from Section \ref{sec:non-linear} on two domains from \citet{sakos2024exploiting}: hidden matching pennies and hidden rock-paper scissors.
For RL policy evaluation we consider the problem of minimizing projected Bellman error (PBE) with linear and non-linear approximation.  

\subsection{Min-max Experiments}
\label{sec:experiments_minmax}

\begin{figure}
    \centering
        \vspace{-13mm}
    \includegraphics[width=\linewidth]{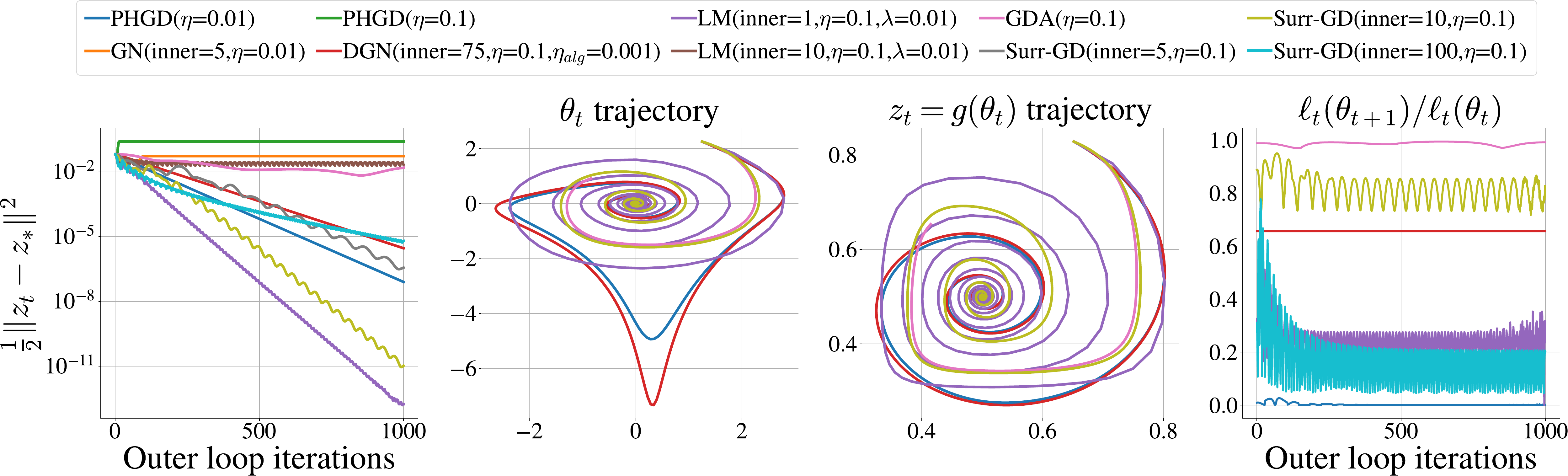}
    \caption{
    \small
    Convergence of various algorithms from Section \ref{sec:non-linear} on the hidden matching pennies game. PHGD and GDA as presented in \citet{sakos2024exploiting} are compared against GN, DGN, LM, and GD. (left) Linear convergence to the equilbrium is observed for several methods with LM and GD outperforming the rest. (middle) Trajectories for some methods  are plotted in both the parameter and prediction space. (right) The loss ratio $\ell_t(\theta_{t+1})/\ell_t(\theta_t)$ is illustrated for the considered methods.}
    \label{fig:pennies}
    \vspace{-5mm}
\end{figure}

We compare four different approaches from Section ~\ref{sec:non-linear}: GN, DGN, LM, and Surr-GD.
Taking only one inner step for GN and Surr-GD gives PHGD and gradient descent-ascent (GDA), respectively.

\begin{wrapfigure}[13]{R}{0.55\textwidth}
\vspace{-0.5em}
\centering
\includegraphics[width=0.9\linewidth]{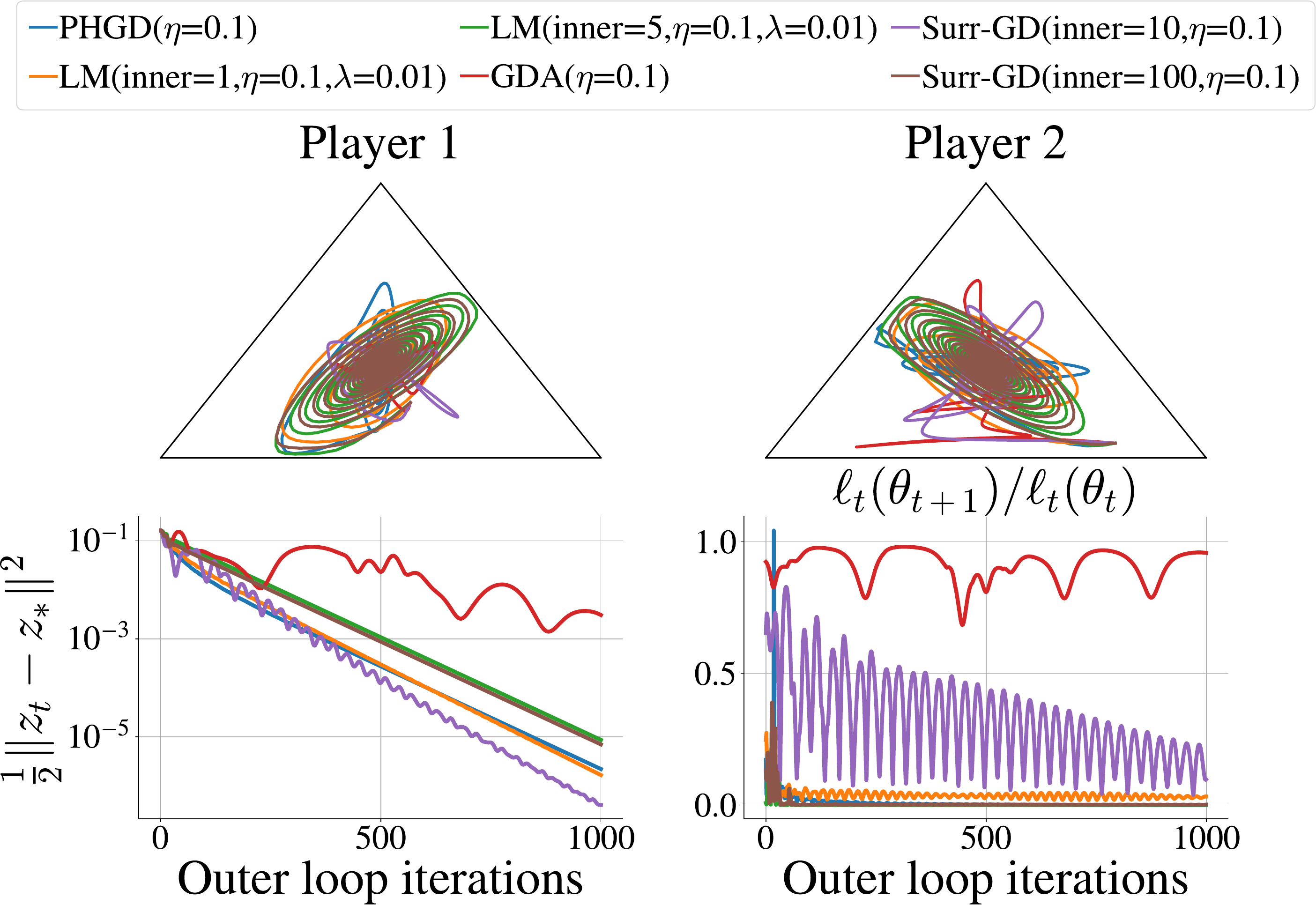}
\vspace{-0.75em}
\caption{
\small
Convergence in the hidden rps game.}
\label{fig:rps}
\end{wrapfigure}

\paragraph{Hidden Matching Pennies.} The hidden matching pennies game is a zero-sum game of the form \eqref{eq:min-max}, where each player has the parameterization $h^i(\theta) = \sigmoid(\alpha_2^i\celu(\alpha^i_1 \theta))$.
The convex-concave objective $f$ is given by $f(z^1,z^2)= -(2z^1-1)(2z^2-1)+\frac{0.75}{2}\left((z^1-\nicefrac{1}{2})^2 + (z^2-\nicefrac{1}{2})^2\right)$.
The parameters $\alpha_j^i$ are chosen to approximately replicate the trajectory of PHGD presented in \citet[Figure 4]{sakos2024exploiting} (see Appendix \ref{sec:appendix-pennies} for more details).
In Figure \ref{fig:pennies}, we observe that PHGD converges linearly in the squared distance to the equilibrium, but performs poorly if multiple inner steps are taken (GN with 5 inner steps). 
If $\eta$ is increased by an order of magnitude, PHGD is observed to diverge; however, convergence is possible via DGN with the same $\eta$, with multiple iterations and an appropriate choice of $\eta_{DGN}$.
In contrast to GN, LM is more stable with a larger $\eta$ and converges faster than PHGD/GN. 
Finally we tested GD for a different number of inner steps with $\eta=0.1$.
Convergence is observed for GDA albeit slow.
The benefit of multiple steps is clear, with 10 inner steps outperforming PHGD and only surpassed by LM.
Although more inner steps increases the computational cost, it is marginal when compared to evaluating $F$ (see Fig.~\ref{fig:pennies-time}).
Interestingly, Fig.~\ref{fig:pennies} (right) shows that spending more compute to minimize the surrogate at each iteration does not necessarily translate to faster overall convergence with respect to the outer loop (left).
GD with 10 inner steps has a larger loss ratio than GD with 100 steps but converges faster to the equilibrium.

\paragraph{Hidden rock-paper-scissors.} In the hidden rps game each player's mixed strategy in rock-paper-scissors is parameterized. 
Where player $i$'s strategy $z^i$ is given by the function $h^i(\theta) = \softmax(A_2^i \celu(A_1^i \theta^i))$, with  $\theta^i \in \reals^5$ and randomly initialized matrices: $A_1^i \in \reals^{4 \times 5}$, and $A_2^i \in \reals^{3 \times 4}$.
Figure \ref{fig:rps} demonstrates the behaviour of various algorithms for a fixed initialization of $\theta = (\theta^1, \theta^2)$ and the matrices $A^i_j$. 
We observe that PHGD and LM with one inner step achieve linear convergence while GDA performs poorly with an unstable behaviour.
Like in the hidden matching pennies game, increasing the number of inner steps for GD improves stability and performance, with the best performance not necessarily corresponding to the methods with the lowest loss ratio.
Both LM and GD degrade in performance if too many inner steps are taken, with one and 10 inner steps outperforming 5 and 100 steps for LM and GD respectively.

\subsection{Minimizing Projected Bellman error}
\label{sec:experiments_PBE}

For our RL experiments we consider the policy evaluation problem of approximating a value vector $v_{\pi} \in \reals^n$, representing the expected discounted return over $n$ states for a given policy $\pi$ and Markov decision process (MDP).
To this end, we consider a common approach to policy evaluation: minimizing the projected Bellman error, which is a fixed point problem associated with temporal difference learning (TD)~\citep{sutton1988learning}. 
Despite the widespread practical success of TD and its variants, the analysis and behaviour of TD has proven challenging since it cannot be modeled as a scalar minimizing problem and is known not to follow the gradient of any objective~\citep{baird1995residual, antos2008learning}.
\citet{bertsekas2009projected} showed that minimizing PBE is equivalent to solving a smooth and strongly monotone VI problem of the form \eqref{eq:vi} where $F(z) = \Xi(z-T_{\pi}(z))$~\citep{bertsekas2009projected}. $T_{\pi} : \reals^n \to \reals^n$ is the linear Bellman policy evaluation operator defined over all $n$ states $(s^1, \cdots, s^n)$ in a given MDP and is defined as $T_{\pi}(z) = r_{\pi} + \gamma P_{\pi}z$,
where $r_{\pi}$ is the associated expected reward vector at each state and $P_{\pi}$ is the state to state probability transition matrix as given by the MDP and $\pi$.
$\Xi$ is the diagonal matrix with entries corresponding to the stationary distribution $\xi \in \reals^n$ of states according to $\pi$.
The constraint $\mathcal{Z}$ in this case is the set of all representable value functions. 
With respect to $\theta$, solving the VI associated with minimizing PBE is equivalent to finding the fixed point:
\begin{align}\label{eq:PBE}
    \theta_\ast \in \underset{\theta}{\arg\min} \left[ BE(\theta, \theta_\ast) =  \frac{1}{2}\sum_{i=1}^n \xi^i(v_{\theta}(s^i) - (r_\pi(s^i)- \gamma\expect[s'\sim P_\pi]{v_{\theta_\ast}(s')|s^i})^2\right],
\end{align}

where $v_{\theta}$ is the predicted value vector given parameters $\theta$. 
See Appendix \ref{sec:appendix-PBE} for more details.

\subsubsection{Linear Approximation}
\begin{wrapfigure}[15]{R}{0.45\textwidth}
\centering
\vspace{-3em}
\includegraphics[width=\linewidth]{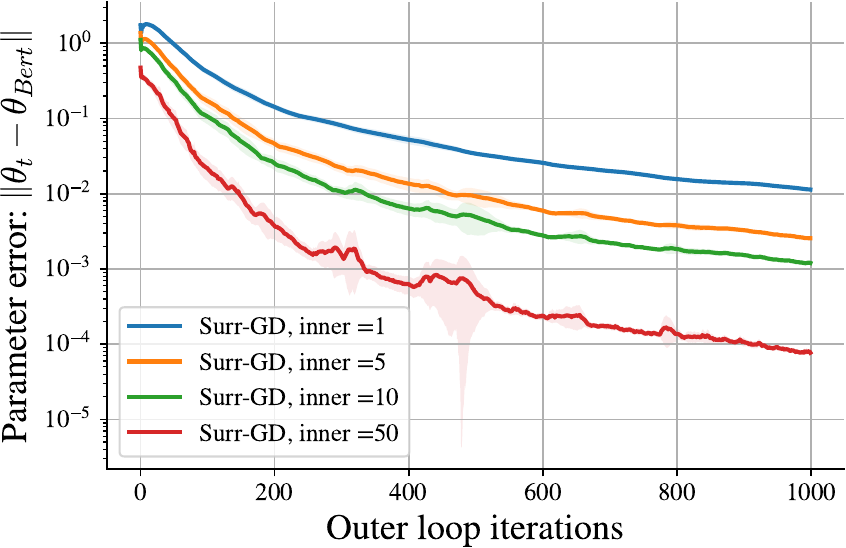}
\vspace{-2em}
\caption{\small The average approximation error between GD on the surrogate (\ref{eq:surr-stoch-berts}, Surr-GD) and update \eqref{eq:bert-update-stoch} over 10,000 runs in a slow mixing 100-state Markov chain from \citet{bertsekas2009projected} and \citet[Example 3]{Hu_Berts_markov_chains_ex}.
Surr-GD is observed to converge to the exact update \eqref{eq:bert-update-stoch} with faster convergence for more inner steps.}
\label{fig:bertsekas_experiments}
\end{wrapfigure}

For our linear RL experiments, we consider a slow-mixing 100-state Markov chain from \citet{bertsekas2009projected} and \citet{Hu_Berts_markov_chains_ex}.
Using a linear model, $z = g(\theta) = \Phi \theta$ with $\theta \in \reals^d$, $\Phi \in \reals^{n \times d}$ and $d \ll n$.
\citet{bertsekas2009projected} proposed several methods to leverage the hidden structure in $F$.
Similar to \citet{sakos2024exploiting}, the proposed methods are presented via preconditioning schemes that approximate a projected gradient step in the space of representable value functions $\mathcal{Z}$.
\citet{bertsekas2009projected} suggests performing the deterministic and stochastic PHGD-like updates:
\begin{align}
    \theta_{t+1} &= \theta_t - (\Phi^\top\Xi\Phi)^{-1}\Phi^\top F(z_t), \label{eq:bert-update} \\
    \theta_{t+1} &= \theta_t - \hat{D}_t^{-1} (\hat{C}_t\theta_t -\hat{r}_t). \label{eq:bert-update-stoch}
\end{align}

The deterministic update is expensive, requiring the full feature matrix $\Phi$. Hence, \citet{bertsekas2009projected} constructs an online sequence of estimators $\{\hat{D}_t\}_{t\in \nnumbers}$, $\{\hat{C}_t\}_{t\in \nnumbers}$, and $\{\hat{r}_t\}$ from a trajectory and suggests the stochastic update (\ref{eq:bert-update-stoch}). 
Although this method applies more generally, in the context of solving \eqref{eq:PBE} update \eqref{eq:bert-update-stoch} is equivalent to least-squares policy evaluation~\citep{bertsekas1996temporal}.

Equivalently, update \eqref{eq:bert-update} can be seen as one step of the generalized Gauss-Newton method~\citep{ortega2000iterative} on the surrogate $\ell_t(\theta) = \nicefrac{1}{2}\|\Phi \theta - T_{\pi}(\Phi\theta_t)\|^2_{\Xi}$. 
Since the model is linear, the update is the minimizer of the surrogate, and corresponds to an exact projected gradient step in the space of value functions, which is known to be a contraction~\citep{bertsekas2012dynamic}.

Similar to the deterministic update, we show that the stochastic version can be shown to be the minimizer of a surrogate loss $\tilde{\ell}_t(\theta)$, whose minimizer is the exact update (\ref{eq:bert-update-stoch}).
The stochastic surrogate $\tilde{\ell}_t$ and its gradient, derived in Proposition \ref{thm:bert-surr}, are
\begin{align} \label{eq:surr-stoch-berts}
    \begin{aligned}
        \tilde{\ell}_t(\theta) = \frac{1}{2}\|\Phi \theta - \hat{T}_{\pi}(z_t)\|^2_{\hat{\Xi}},
    \end{aligned}
    && 
    \begin{aligned}
        \nabla \tilde{\ell}_t(\theta) = \hat{C}_t \theta_t-\hat{r}_t + \hat{D}_t(\theta - \theta_t),
    \end{aligned}
\end{align}
where $\hat{\Xi}$ and $\hat{T}_{\pi}$ correspond to the natural estimators using the empirical distribution (see Appendix ~\ref{sec:appendix-linear} for more details). Instead of minimizing the loss exactly it can be approximated with GD, thus avoiding the matrix inversion $\hat{D}^{-1}_t$, which can make \eqref{eq:bert-update-stoch} intractable. 
Figure \ref{fig:bertsekas_experiments} demonstrates the effectiveness of approximating update \eqref{eq:bert-update-stoch} via GD on the surrogate $\tilde{\ell}_t$~\eqref{eq:surr-stoch-berts}.

\subsubsection{Non-linear Projected Bellman Error}
\begin{figure}
    \centering
        \vspace{-13mm}
    \includegraphics[width=\linewidth, scale=0.4]{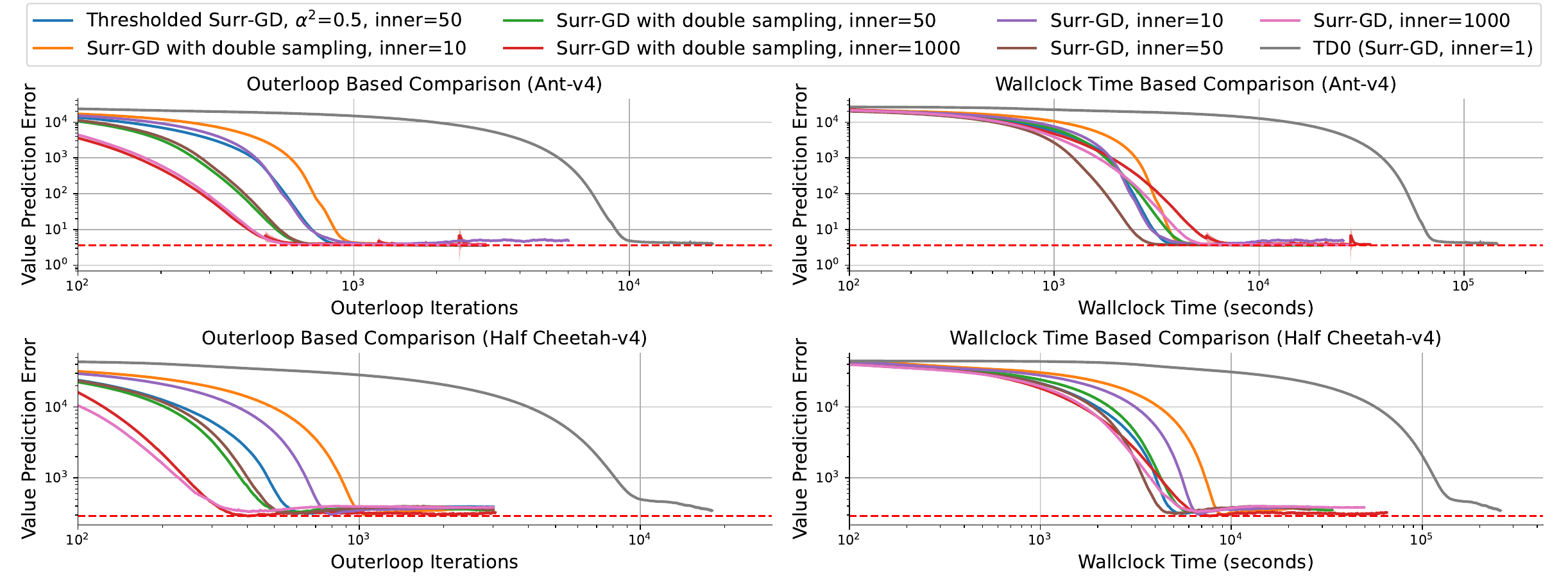}
    \vspace{-7mm}
    \caption{
    \small
    Comparison of average performance of TD(0) and surrogate methods in minimizing the value prediction error for RL tasks with nonlinear function approximation in Ant (top) and HalfCheetah (bottom) environments, measured by outer loop iterations (left) and wallclock time (right).
    The average value prediction error across 20 runs along with 95\% confidence intervals are computed from a fixed test set.
    The red dashed line represents the lowest value prediction error achieved by any of the algorithms.
    }
    \label{fig:nonlinear}
        \vspace{-7mm}
\end{figure}

For the non-linear setting, we approximate the value function of a fixed reference policy with a two-layer neural network in two Mujoco environments, Ant and Half Cheetah~\citep{todorov2012mujoco}. These environments are characterized by continuous state and action spaces, where the agent must control high-dimensional, physics-based simulations to achieve locomotion tasks. The reference policy is derived by training a policy network with the soft-actor-critic algorithm~\citep{haarnoja2018soft}.
For each environment, we consider two different surrogate-based methods with TD(0) as a special case.
Similar to the linear case, we aim to find the fixed point \eqref{eq:PBE}, and consider the surrogate $\ell_t(\theta) = \nicefrac{1}{2}\|v_{\theta}-T_{\pi}(v_{\theta_t})\|_{\Xi}^2$.
However, due to the large state spaces we must instead consider miniziming the stochastic surrogate $\Tilde{\ell}_t$ \eqref{eq:surr-stoch} from Section \ref{sec:stochastic}.
Notice that $\Tilde{\ell}_t$ consists of two main parts: an inner product term that is linear in $v_{\theta} =g(\theta)$, and a squared error term $\|v_{\theta}- v_{\theta_t}\|^2$.
Our two surrogate methods use  different estimates for the linear and error term.

The first method, Surr-GD, uses the same batch of data to approximate both the inner product part and the squared error term.
This fixed batch approach was proposed in \citet{lavington2023target} for scalar minimization, and introduces bias due to approximating the squared error term across all states with states seen in the batch.
With this approximation we perform either a fixed number of steps (inner = $\#$ steps) or an adaptive number of steps depending on a loss ratio (Thresholded Surr-GD).
Taking one step recovers a batch version of TD(0), for more details see Algorithms \ref{alg:surrogate_inner},  \ref{alg:thresholded_surr}.

The second method, surrogate with double sampling, uses a fixed batch to approximate the inner product term but resamples a new batch to approximate the squared error part at each gradient step.
By resampling a new batch for the squared error we have removed the bias from the first method at the cost of increased variance in the gradient.
For more details see Algorithm \ref{alg:double_sampling}.

All methods are evaluated with value prediction error, the squared error for a test sample of 10,000 states visited by a fixed reference policy.
The values in these test states are approximated by 10 Monte-Carlo roll-outs.
As shown in Figure~\ref{fig:nonlinear}, surrogate methods significantly outperformed TD(0) in both data efficiency (outer loop iterations) and wall-clock time.
Interestingly, the surrogate methods converge to a lower prediction error than TD(0), as indicated by the red dashed line. 

Although the inner loop in the surrogate methods requires more gradient computations, the computational overhead is marginal when compared to environment interactions, a common bottleneck in RL tasks~\citep{vaswani2021general}.
We observe that increasing the inner loop size improves data efficiency at the trade-off of a potentially slower convergence in wall-clock time. 
An inner loop count, such as $50$, seems to balance  both data efficiency and wall-clock time.

\section{Conclusion}

In this work we proposed a principled and scalable surrogate loss approach to solving variational inequality problems with hidden monotone structure.
We have presented a novel $\alpha$-descent condition that is optimizer agnostic and quantifies sufficient progress on the surrogate for convergence.  
We have proved linear convergence in both the deterministic and stochastic settings.  We have also shown that surrogate losses in VI problems are strictly more difficult to analyze than in scalar minimization. 
Furthermore, we have demonstrated the generality of our approach by showing how existing  methods can be viewed as special cases of our general framework.
Empirically, we have demonstrated the effectiveness of using surrogate losses in both min-max optimization and minimizing projected Bellman error. Finally, by using surrogate losses, we have proposed novel variants of TD(0) that are more data and run-time efficient in the deep reinforcement learning setting.

\subsubsection*{Acknowledgments}
We thank Nicolas Le Roux, David Kanaa, Iosif Sakos, Andrew Patterson, Khurram Javed, and Arushi Jain, for helpful discussions
and feedback. This work was supported by Borealis AI through the Borealis AI
Global Fellowship Award.

\bibliography{iclr}
\bibliographystyle{iclr2025_conference}
\clearpage
\appendix

\section{Proofs}
\begin{remark}\label{remark:inf}
If $g$ is continuous, and $\{g(\theta): \theta  \in\reals^d\}$ is convex with $\mathcal{Z}$ as its closure, then the least-squares surrogate loss $\ell_t(\theta)=\frac{1}{2}\|g(\theta)-z_{t}+\eta F(z_t)\|^2$ admits a unique point $z_t^\ast \in \mathcal{Z}$ such that
\[
    \ell_t^\ast =  \frac{1}{2}\|z_t^\ast-z_{t}+\eta F(z_t)\|^2,
\]
and for any $\theta$
\[
\frac{1}{2}\|g(\theta) -z_t^\ast\|^2 \leq \ell_t(\theta)-\ell_t^\ast,
\]
where $\ell_t^\ast = \inf_{\theta \in \reals^d} \ell_t(\theta)$.
\end{remark}
\begin{proof}
    Let $f(z) = \frac{1}{2}\|z-z_t+\eta F(z_t)\|^2$ be the surrogate loss with respect to the predictions $z = g(\theta)$.
    We have that $f(z) = \ell_t(\theta)$ for all $\theta \in \reals^d$.
    Now consider the set $\mathcal{Z}=\closure{\{g(\theta): \theta \in \reals^d\}}$, since it is closed and convex, we have that $f$ has a unique minimum $z_t^\ast$ because it is 1-strongly convex. 
    Furthermore, we have
    \[
        \frac{1}{2}\|z-z_t^\ast\|^2 \leq f(z) -f(z_t^\ast).
    \]
    Now since $z_t^\ast \in \mathcal{Z}$ and $\mathcal{Z}$ is the closure of $\{g(\theta): \theta \in  \reals^d\}$, there exists a sequence of parameters $\{\theta_t\}_{t\in \nnumbers}$ such that
    $\{z_t = g(\theta_t)\}_{t\in\nnumbers} \to z_t^\ast$.
    Therefore we have that,
    \begin{align*}
        f(z_t^\ast) = \lim_{t \to \infty} f(z_t) = \lim_{t \to \infty} \ell_t(\theta_t) \geq \ell_t^\ast \geq f(z_t^\ast).
    \end{align*}
    Where we have used the continuity of $f$ and $g$. The last inequality follows because $\{g(\theta): \theta \in  \reals^d\} \subseteq \mathcal{Z}$.
    Therefore $\ell_t^\ast = f(z_t^\ast)$ and the result follows.
\end{proof}

\begin{lemma}\label{thm:bias}
    If $F$ is monotone and $z_\ast$ is in the relative interior of the constraint $\mathcal{Z}$, then 
    \[
        \ell_t(\theta_t) - \ell_t^\ast \leq \frac{\eta^2 }{2}\|F(z_t)-F(z_\ast)\|^2.
    \]
    Furthermore, under the $\alpha$-descent condition (Definition \ref{eq:alpha}) we have
    \[
    \|z_{t+1}-z_t^\ast\| \leq \alpha \eta \|F(z_t)-F(z_\ast)\|.
    \]
\end{lemma}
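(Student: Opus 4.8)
The plan is to prove the first inequality by directly expanding $\ell_t(\theta_t) - \ell_t^\ast$ and then to bootstrap to the second inequality using the strong-convexity bound of Remark~\ref{remark:inf}. By that remark, $\ell_t$ has a unique minimizer $z_t^\ast = \Pi(z_t - \eta F(z_t))$ over $\mathcal{Z}$. First I would compute the two terms explicitly: since $\ell_t(\theta_t) = \tfrac{1}{2}\|z_t - (z_t - \eta F(z_t))\|^2 = \tfrac{\eta^2}{2}\|F(z_t)\|^2$ and $\ell_t^\ast = \tfrac{1}{2}\|z_t^\ast - z_t + \eta F(z_t)\|^2$, expanding the latter square and cancelling the common $\tfrac{\eta^2}{2}\|F(z_t)\|^2$ yields the identity
\[
\ell_t(\theta_t) - \ell_t^\ast = -\tfrac{1}{2}\|z_t - z_t^\ast\|^2 + \eta\inner{z_t - z_t^\ast}{F(z_t)}.
\]

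The crucial step is to exploit the relative-interior hypothesis $z_\ast \in \ri\mathcal{Z}$. Since $z_\ast$ solves the VI, for any direction $d$ in the tangent space of $\mathcal{Z}$ one can take the perturbations $z_\ast \pm \epsilon d \in \mathcal{Z}$ for small $\epsilon > 0$; applying the VI inequality to both signs forces $\inner{F(z_\ast)}{d} = 0$, i.e. $F(z_\ast)$ is orthogonal to the affine hull of $\mathcal{Z}$. As $z_t$ and $z_t^\ast$ both lie in $\mathcal{Z}$, their difference lies in that tangent space, so $\inner{z_t - z_t^\ast}{F(z_\ast)} = 0$ and I may replace $F(z_t)$ by $F(z_t) - F(z_\ast)$ in the identity above. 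Writing $a = \|z_t - z_t^\ast\|$ and $b = \|F(z_t) - F(z_\ast)\|$, Cauchy--Schwarz gives $\ell_t(\theta_t) - \ell_t^\ast \leq -\tfrac{1}{2}a^2 + \eta a b$, and completing the square, $-\tfrac{1}{2}a^2 + \eta a b = -\tfrac{1}{2}(a - \eta b)^2 + \tfrac{\eta^2}{2}b^2 \leq \tfrac{\eta^2}{2}b^2$, which is exactly the first claimed inequality.

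For the second inequality, I would invoke the strong-convexity bound of Remark~\ref{remark:inf} at $\theta = \theta_{t+1}$, namely $\tfrac{1}{2}\|z_{t+1} - z_t^\ast\|^2 \leq \ell_t(\theta_{t+1}) - \ell_t^\ast$, then chain the $\alpha$-descent condition (Definition~\ref{eq:alpha}) $\ell_t(\theta_{t+1}) - \ell_t^\ast \leq \alpha^2(\ell_t(\theta_t) - \ell_t^\ast)$ with the first inequality to obtain $\tfrac{1}{2}\|z_{t+1} - z_t^\ast\|^2 \leq \tfrac{\alpha^2\eta^2}{2}\|F(z_t) - F(z_\ast)\|^2$; taking square roots finishes the proof. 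I expect the main obstacle to be the relative-interior argument in the second paragraph: it is the only place the hypothesis $z_\ast \in \ri\mathcal{Z}$ enters, and it is precisely what lets me center the operator at $F(z_\ast)$ so that the bias scales with $\|F(z_t) - F(z_\ast)\|$ rather than $\|F(z_t)\|$. This distance-dependent scaling is essential, since it is what makes the bias vanish as $z_t \to z_\ast$, a property the subsequent convergence analysis relies on. Everything else is a routine expansion, one application of Cauchy--Schwarz, and completing the square.
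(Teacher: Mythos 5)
Your proposal is correct and follows essentially the same route as the paper's proof: the same expansion of $\ell_t(\theta_t)-\ell_t^\ast$, the same use of $z_\ast \in \ri\mathcal{Z}$ to show $\inner{F(z_\ast)}{z-z'}=0$ for all $z,z'\in\mathcal{Z}$ (you perturb $z_\ast \pm \epsilon d$ in tangent directions, while the paper extends the segment through $z_\ast$ via \citet{rockafellar1997convex}[Theorem 6.4]; these are interchangeable), and the same chaining of Remark~\ref{remark:inf} with the $\alpha$-descent condition. Your Cauchy--Schwarz plus completing-the-square step is just the paper's Young-inequality bound in different clothing.
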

\begin{proof}
    If $z_\ast$ is a solution then we have
    \[
        \inner{F(z_\ast)}{z-z_\ast} \geq 0\,,\; \forall z \in \mathcal{Z}.
    \]
    If $z_\ast$ is in the relative interior then for any $z \in \mathcal{Z}$ there exists a $\lambda > 1$ such that $z' = (1-\lambda)z +\lambda z_\ast  \in \mathcal{Z}$~\citep{rockafellar1997convex}[Theorem 6.4].
    Therefore by optimality we have
    \begin{align*}      
       0 &\leq \inner{F(z_\ast)}{z'-z_\ast} 
       =\inner{F(z_\ast)}{ (1-\lambda)z +\lambda z_\ast-z_\ast} 
        =(1-\lambda)\inner{F(z_\ast)}{z-z_\ast} \leq 0.   
    \end{align*}
    Where the last inequality follows from $\lambda > 1$. Altogether we have 
    \[
        \inner{F(z_\ast)}{z-z_\ast} = 0 \,,\;\forall z \in \mathcal{Z}.
    \]
    Moreover, as a consequence we have that for any two points $z, z' \in \mathcal{Z}$
    \[
        \inner{F(z_\ast)}{z-z'} = \inner{F(z_\ast)}{z-z_\ast} + \inner{F(z_\ast)}{z_\ast-z'} = 0.
    \]
    Letting $z_t^\ast$ be the exact projected gradient step and $z_t =g(\theta_t)$ the current iterate  we have
    \begin{align*}
        \ell_t(\theta_t) - \ell_t^\ast &= \frac{1}{2}\|\eta F(z_t)\|^2 - \left(\frac{1}{2}\|z_t^\ast -z_t + \eta F(z_t)\|^2 \right)\\
        &=\eta \inner{F(z_t)}{z_t-z_t^\ast}  -\frac{1}{2}\|z_t^\ast-z_t\|^2\\
        &= \eta \inner{F(z_t) - F(z_\ast)}{z_t-z_t^\ast} + \eta \inner{F(z_\ast)}{z_t-z_t^\ast}  -\frac{1}{2}\|z_t^\ast-z_t\|^2\\
        &= \eta \inner{F(z_t) - F(z_\ast)}{z_t-z_t^\ast} -\frac{1}{2}\|z_t^\ast-z_t\|^2\\
        &\leq \frac{\eta^2 }{2}\|F(z_t)-F(z_\ast)\|^2.
    \end{align*}
    Where the last two inequalities follow by: $z_\ast$ being within the relative interior, and the inequality $\inner{u}{v} \leq \frac{\rho}{2}\|u\|^2 +\frac{1}{2\rho}\|v\|^2$ for any $\rho > 0$.

    By Remark \ref{remark:inf} and the $\alpha$-decent condition,
    \[
        \frac{1}{2}\|z_{t+1}-z_t^\ast\|^2 \leq \ell_t(\theta_{t+1})-\ell_t^\ast \leq \alpha^2 (\ell_t(\theta_t) -\ell_t^\ast) \leq \alpha^2\frac{\eta^2}{2}\|F(z_t)-F(z_\ast)\|^2.
    \] 
\end{proof}

\begin{proposition}\label{thm:alpha-scalar}
    Let $f: \reals^n \to \reals$ be $L$-smooth.
    For some $g: \reals^d \to \reals^n$ define the surrogate loss $\ell_t(\theta) = \frac{1}{2}\|g(\theta)-z_t+\frac{1}{L}\nabla f(z_t)\|^2$,
    where $z_t = g(\theta_t)$.
    Then the $\alpha$-descent condition (Definition \ref{eq:alpha}) guarantees
    \[
    f(z_{t+1}) \leq f(z_t) -L(1-\alpha^2)(\ell_t(\theta_t)- \ell_t^\ast).
    \]
    If $f$ is bounded below then $\{z_t - z_{t}^\ast\}_{t\in \nnumbers}\to 0$, where $z_t^\ast = \Pi(z_t - \frac{1}{L}\nabla f(z_t))$.
\end{proposition}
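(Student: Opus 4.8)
The plan is to connect the standard smoothness descent inequality for $f$ with the surrogate values, exploiting that the stepsize has been fixed to $\eta = \nicefrac{1}{L}$. First I would set $v_t = z_t - \frac{1}{L}\nabla f(z_t)$ so that $\ell_t(\theta) = \frac{1}{2}\|g(\theta) - v_t\|^2$ and $z_t^\ast = \Pi(v_t)$. Expanding the two relevant surrogate values directly gives $\ell_t(\theta_t) = \frac{1}{2L^2}\|\nabla f(z_t)\|^2$ and $\ell_t(\theta_{t+1}) = \frac{1}{2}\|z_{t+1}-z_t\|^2 + \frac{1}{L}\inner{\nabla f(z_t)}{z_{t+1}-z_t} + \frac{1}{2L^2}\|\nabla f(z_t)\|^2$, so that
\[
L\big(\ell_t(\theta_{t+1}) - \ell_t(\theta_t)\big) = \inner{\nabla f(z_t)}{z_{t+1}-z_t} + \frac{L}{2}\|z_{t+1}-z_t\|^2 .
\]
The right-hand side is precisely the first-order-plus-quadratic terms appearing in the $L$-smoothness descent lemma $f(z_{t+1}) \leq f(z_t) + \inner{\nabla f(z_t)}{z_{t+1}-z_t} + \frac{L}{2}\|z_{t+1}-z_t\|^2$. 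This algebraic coincidence --- which is exactly what the choice $\eta = \nicefrac{1}{L}$ buys --- is the crux of the argument; substituting it yields $f(z_{t+1}) \leq f(z_t) + L(\ell_t(\theta_{t+1}) - \ell_t(\theta_t))$.

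The remaining step for the first claim is routine: writing $\ell_t(\theta_{t+1}) - \ell_t(\theta_t) = (\ell_t(\theta_{t+1}) - \ell_t^\ast) - (\ell_t(\theta_t) - \ell_t^\ast)$ and applying the $\alpha$-descent condition $\ell_t(\theta_{t+1}) - \ell_t^\ast \leq \alpha^2(\ell_t(\theta_t) - \ell_t^\ast)$ bounds this difference above by $-(1-\alpha^2)(\ell_t(\theta_t) - \ell_t^\ast)$, which gives the claimed inequality $f(z_{t+1}) \leq f(z_t) - L(1-\alpha^2)(\ell_t(\theta_t) - \ell_t^\ast)$. Note that no convexity of $f$ is used anywhere, consistent with the non-convex setting advertised in the surrounding text.

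Finally, for the limit I would telescope this descent inequality over $t$. Since $\alpha < 1$ and $L > 0$, each increment $L(1-\alpha^2)(\ell_t(\theta_t) - \ell_t^\ast)$ is nonnegative, so if $f$ is bounded below then $\sum_t (\ell_t(\theta_t) - \ell_t^\ast) < \infty$, and in particular $\ell_t(\theta_t) - \ell_t^\ast \to 0$. To translate this into a statement about $z_t - z_t^\ast$, I would invoke Remark \ref{remark:inf}: evaluating its bound at $\theta = \theta_t$ (so $g(\theta_t) = z_t$) gives $\frac{1}{2}\|z_t - z_t^\ast\|^2 \leq \ell_t(\theta_t) - \ell_t^\ast$, whence letting $t \to \infty$ forces $z_t - z_t^\ast \to 0$. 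The only mild point to verify is that Remark \ref{remark:inf} applies here --- i.e. that the $z_t^\ast$ it produces coincides with the projection $\Pi(z_t - \frac{1}{L}\nabla f(z_t))$ named in the statement --- which holds because both are the unique minimizer of the same $1$-strongly convex quadratic over the closed convex set $\mathcal{Z}$.
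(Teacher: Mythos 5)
Your proposal is correct and follows essentially the same route as the paper's proof: the key identity $f(z_{t+1}) - f(z_t) \leq L\bigl(\ell_t(\theta_{t+1}) - \ell_t(\theta_t)\bigr)$ obtained by matching the $L$-smoothness descent terms with the expanded surrogate values (the paper derives it by writing $\nabla f(z_t) = L(z_t - \hat{z}_t)$ and completing the square, which is algebraically identical to your direct expansion), followed by the same application of the $\alpha$-descent condition, the same telescoping argument under boundedness of $f$, and the same invocation of Remark \ref{remark:inf} at $\theta = \theta_t$ to conclude $z_t - z_t^\ast \to 0$. Your added observation that the minimizer in Remark \ref{remark:inf} coincides with the projection $\Pi(z_t - \frac{1}{L}\nabla f(z_t))$ is a minor point of care the paper leaves implicit, but it does not constitute a different argument.
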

\begin{proof}
    Let $\hat{z_t} = z_t- \frac{1}{L}\nabla f(z_t)$.
    \begin{align*}
        f(z_{t+1}) - f(z_t) &\leq \inner{\nabla f(z_t)}{z_{t+1}-z_t} + \frac{L}{2}\|z_{t+1}-z_t\|^2\\
        &= L\inner{z_t-\hat{z}_t}{z_{t+1}-z_t}+ \frac{L}{2}\|z_{t+1}-z_t\|^2\\
        &=L\left(-\|\hat{z}_t - z_t\|^2 +\inner{z_t-\hat{z}_t}{z_{t+1}-\hat{z}_t} + \frac{1}{2}\|z_{t+1}-z_t\|^2\right)\\
        &=L\left(\frac{1}{2}\|\hat{z}_t-z_{t+1}\|^2-\frac{1}{2}\|\hat{z}_t-z_t\|^2 \right)\\
        &=L\left(\ell_t(\theta_{t+1})-\ell_{t}(\theta_t) \right).
    \end{align*}
    The second to last equality follows from  expanding $\frac{1}{2}\|z_{t+1} -z_t\|^2 = \frac{1}{2}\|z_{t+1}-\hat{z}_t +\hat{z}_t- z_t\|^2$.
    Using the $\alpha$-descent condition we have
    \[
    \ell_t(\theta_{t+1}) -\ell_{t}(\theta_t) \leq (\alpha^2-1)( \ell_t(\theta_t) -\ell_t^\ast),
    \]
    yielding the first result.

    If $f$ is bounded below by some constant $c$ then we have:
    \begin{align*}
        \sum_{t=1}^T L(1-\alpha^2)(\ell_t(\theta_t)- \ell_t^\ast) \leq f(z_1)-f(z_T) \leq f(z_1)-c.
    \end{align*}
    Therefore the series $\sum_{t=1}^\infty\ell_t(\theta_t)- \ell_t^\ast$ converges with $\ell_t(\theta_t)- \ell_t^\ast \to 0$.
    By Remark \ref{remark:inf} we have  $\frac{1}{2}\|z_t - z_t^\ast\|^2 \leq \ell_t(\theta_t)- \ell_t^\ast$ implying the result $z_t - z_t^\ast \to 0$.
\end{proof}

{
\renewcommand{\thetheorem}{\ref{thm:det}}
\begin{theorem}
    Let Assumption \ref{assumption:vi} hold and let $\{z_t = g(\theta_t)\}_{t\in \nnumbers}$ be  the iterates produced by Algorithm~\ref{alg:surr}. If $\alpha$ and $\eta$ are picked  such that
$
\rho:= 1-2\eta(\mu-\alpha L) + (1+\alpha^2)\eta^2 L^2 <1 
$ then, $z_t$ converge linearly to the solution $z_\ast$ at the following linear rate:
\begin{equation}
    \|z_{t+1} -z_*\|^2 \leq \rho^t \|z_1-z_*\|^2.
\end{equation}Particularly, if $\alpha< \frac{\mu}{L}$ and $\eta < \frac{2 (\mu-\alpha L)}{(1+\alpha^2)L^2}$ then $\rho <1$ and if
$\alpha \leq \frac{\mu}{2L}$ and $\eta = \frac{2\mu}{5L^2}$ then $\rho \leq 1 - \frac{\mu^2}{5L^2}$.
\end{theorem}
}
\begin{proof}
First note that by definition of Algorithm \ref{alg:surr}, the iterates ${z_t = g(\theta_t)}_{t\in\nnumbers}$ satisfy the $\alpha$-descent property (Definition \ref{eq:alpha}), therefore Lemma \ref{thm:bias} holds.
Recall that $z_t^\ast$, the exact projection update, is a contraction if $\eta < \frac{2\mu}{L}$ since 
\begin{equation}
    \label{eq:proj:contract}
    \|z_t^\ast -z_\ast\|^2 \leq \kappa^2\|z_t-z_\ast\|^2
\end{equation}
where $\kappa^2 = 1-2\eta \mu + \eta^2 L^2$\cite[Theorem 12.1.2]{facchinei2003finite}.
For the remainder, assume that $\eta < \frac{2\mu}{L}$ so that $\kappa \in [0, 1)$.
Denoting $F_t = F(z_t)$ and $F_\ast = F(z_\ast)$, we have
    \begin{align*}
        \|z_{t+1}-z_\ast\|^2 &= \|z_{t}^\ast -z_\ast + z_{t+1} - z_{t}^\ast \|^2\\
        &= \|z_t^\ast-z_\ast\|^2 + 2\inner{z_{t+1}-z_t^\ast}{z_t^\ast-z_\ast} + \|z_{t+1} - z_{t}^\ast \|^2\\
        &\leq \|z_t^\ast-z_\ast\|^2 + 2\|z_{t+1}-z_t^\ast\|\|z_t^\ast-z_\ast\| + \|z_{t+1} - z_{t}^\ast \|^2 \qquad\qquad \text{(Cauchy–Schwarz)}\\
        &\leq \|z_t^\ast-z_\ast\|^2 + 2\alpha \eta \kappa  \|F_t- F_\ast\|\|z_t-z_\ast\| + \alpha^2 \eta^2\|F_t - F_\ast\|^2 \qquad\qquad \text{(Lemma~\ref{thm:bias})}\\
        &\leq  \|z_t^\ast-z_\ast\|^2 + 2\alpha \eta L\|z_t-z_\ast\|^2 + \alpha^2 \eta^2L^2\|z_t-z_\ast\|^2\;\; \text{(Smoothness of $F$ and $\kappa<1$)}\\
        &\leq \kappa^2 \|z_t-z_\ast\|^2 + 2\alpha \eta L\|z_t-z_\ast\|^2 + \alpha^2 \eta^2L^2\|z_t-z_\ast\|^2\qquad \qquad \qquad \qquad \;\, \text{(Eq.~\ref{eq:proj:contract})}\\
        &= \|z_t-z_\ast\|^2\left(1-2\eta \mu +2\alpha \eta L + (1+\alpha^2)\eta^2 L^2\right).
    \end{align*}
If $\alpha < \frac{\mu}{L}$ then
\begin{align*}
  \|z_{t+1}-z_\ast\|^2 \leq \|z_{t}-z_\ast\|^2\left( 1 -2\eta \underbrace{(\mu -\alpha L)}_{>0} + (1+\alpha^2)\eta^2 L^2\right).
\end{align*}
Taking $\eta < \frac{2(\mu-\alpha L)}{(1+\alpha^2)L^2}$ would guarantee a contraction.

If $\alpha \leq \frac{\mu}{2L}$ and taking $\eta = \frac{2\mu}{5L^2}$ we have: 
\begin{align*}
    1-2\eta \mu +2\alpha \eta L + (1+\alpha^2)\eta^2 L^2 &\overset{\alpha \leq \nicefrac{\mu}{2L} \leq \nicefrac{1}{2}}{\leq} 1-\eta \mu +\left(1+\frac{1}{4}\right)\eta^2 L^2 \\
    &\overset{\eta = \frac{2\mu}{L^2}}{=}1-\frac{2\mu^2}{5L^2} +\frac{\mu^2}{5L^2} = 1- \frac{\mu^2}{5L^2}.
\end{align*}

\end{proof}

\begin{proposition}\label{thm:contraction-error-limit}
Take $\theta_{t+1}$ to be an approximate minima of $\ell_t(\theta)$. 
Suppose $T = \Pi \circ (\id - \eta F)$ is a contraction, if $\ell_t(\theta_{t+1})- \ell_t^\ast \to 0$ then the induced sequence $\{z_{t} = g(\theta_t)\}_{t \in \nnumbers}$ converges, $z_t \to z_{\ast}$ where $z_{\ast}$ is the unique solution to $\vi(\mathcal{Z}, F)$.
\end{proposition}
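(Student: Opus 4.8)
The plan is to convert the stated decay of the surrogate gap into a vanishing error on the exact projected-gradient step, and then feed this into a standard perturbed-contraction recursion. First I would invoke Remark~\ref{remark:inf}, which guarantees that $\frac{1}{2}\|z_{t+1}-z_t^\ast\|^2 \leq \ell_t(\theta_{t+1})-\ell_t^\ast$, where $z_t^\ast = T(z_t)$ is the exact projected-gradient step, i.e. the unique minimizer of $\ell_t$ over $\mathcal{Z}$. Since $\ell_t(\theta_{t+1})-\ell_t^\ast \to 0$ by hypothesis, this immediately yields that the error $\epsilon_t := \|z_{t+1}-z_t^\ast\| \to 0$.

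Next, writing $a_t := \|z_t-z_\ast\|$ and noting that $z_\ast$ is the unique fixed point of the contraction $T$ (equivalently the unique solution of $\vi(\mathcal{Z},F)$), I would decompose via the triangle inequality together with the contraction property:
\[
a_{t+1} = \|z_{t+1}-z_\ast\| \leq \|z_{t+1}-z_t^\ast\| + \|T(z_t)-T(z_\ast)\| \leq \epsilon_t + \kappa\, a_t,
\]
where $\kappa \in [0,1)$ is the contraction factor. This reduces the claim to the elementary fact that any nonnegative sequence satisfying $a_{t+1} \leq \kappa a_t + \epsilon_t$ with $\kappa<1$ and $\epsilon_t \to 0$ must converge to $0$.

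Finally, I would establish this perturbed-contraction fact. Unrolling the recursion from an index $N$ gives
\[
a_{t+1} \leq \kappa^{t+1-N} a_N + \sum_{j=N}^{t} \kappa^{t-j}\epsilon_j,
\]
whose first term vanishes as $t\to\infty$. For the second term, fixing $\delta>0$ and choosing $N$ so that $\epsilon_j < \delta$ for all $j \geq N$, the geometrically weighted tail is bounded by $\delta \sum_{i\geq 0}\kappa^i = \delta/(1-\kappa)$, so $\limsup_{t\to\infty} a_t \leq \delta/(1-\kappa)$. Letting $\delta \downarrow 0$ forces $a_t \to 0$, that is $z_t \to z_\ast$. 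The only mildly delicate point is this last $\limsup$ argument with the geometrically weighted sum of the $\epsilon_j$; everything else follows directly from Remark~\ref{remark:inf} and the contraction property, so I expect no real obstacle beyond keeping the index bookkeeping in the unrolled sum clean. I note also that this argument is precisely the quasi-Fej\'er flavoured reasoning alluded to in the related work, specialized to the contraction case.
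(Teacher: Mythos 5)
Your proof is correct, and it takes a genuinely different route from the paper's. Both proofs begin identically, using Remark~\ref{remark:inf} to convert the hypothesis $\ell_t(\theta_{t+1})-\ell_t^\ast \to 0$ into $\epsilon_t = \|z_{t+1}-z_t^\ast\| \to 0$. From there you diverge: you apply the triangle inequality to the \emph{unsquared} distances, obtaining $a_{t+1} \leq \kappa a_t + \epsilon_t$, and then prove the resulting perturbed-contraction fact from scratch by unrolling the recursion and bounding the geometrically weighted tail, which gives $\limsup_t a_t \leq \delta/(1-\kappa)$ for every $\delta>0$. The paper instead expands the \emph{squared} distance $\frac{1}{2}\|z_{t+1}-z_\ast\|^2 = \frac{1}{2}\|z_t^\ast - z_\ast + \epsilon_t\|^2$, absorbs the cross term with Young's inequality (introducing a parameter $\rho$ chosen so that $\kappa(1+\rho)<1$), and arrives at a recursion of the form $a_{t+1}^2 \leq \kappa(1+\rho)\,a_t^2 + (1+\nicefrac{1}{\rho})\,(\ell_t(\theta_{t+1})-\ell_t^\ast)$, which it then closes by citing Lemma~3.9 of \citet{FRANCI2022161} on quasi-Fej\'er monotone sequences. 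Your argument buys self-containedness: no auxiliary parameter $\rho$, no external lemma, and the unsquared triangle inequality makes the contraction structure transparent. The paper's squared-distance formulation buys consistency with the rest of its analyses (Theorems~\ref{thm:det} and~\ref{thm:stochastic} are also run in squared distances) and plugs directly into the quasi-Fej\'er framework, which accommodates more general error conditions (e.g., summable rather than vanishing perturbations) should one want to weaken the hypothesis. As you note yourself, your unrolling argument is exactly the contraction specialization of that machinery, so the two proofs are morally the same result reached by an elementary versus a citation-based path.
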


\begin{proof}
Let $\epsilon_t = z_{t+1}-z_t^\ast$  be the approximation error between $z_{t+1}$ and $z_t^\ast$ the minimum  of the surrogate $\ell_t$ (exact projected gradient step). 
    \begin{align*}
        \frac{1}{2}\| z_{t+1}-z_\ast\|^2 &= \frac{1}{2}\| z_{t}^\ast-z_\ast + \epsilon_t\|^2\\
        &= \frac{1}{2}\| z_{t}^\ast-z_\ast\|^2 + \langle z_{t}^\ast-z_\ast, \epsilon_t\rangle + \frac{1}{2}\|\epsilon_t\|^2\\
        \overset{\rho > 0}&{\leq} \frac{1}{2}\| z_{t}^\ast-z_\ast\|^2+\frac{\rho}{2}\| z_{t}^\ast-z_\ast\|^2 +\frac{1}{2\rho}\|\epsilon_t\|^2 + \frac{1}{2}\|\epsilon_t\|^2\\
        \overset{\kappa \in [0,1)}&{\leq} \frac{\kappa(1 + \rho)}{2} \| z_{t}-z_\ast\|^2 + (1+\frac{1}{\rho})(\ell_t(\theta_{t+1})-\ell_t^\ast).
        \end{align*}
Where we use the fact that $\frac{1}{2}\|\epsilon_t\|^2 = \frac{1}{2}\|z_{t+1}-z_t^\ast\|^2 \leq \ell_t(\theta_{t+1})-\ell_t^\ast$ from Remark \ref{remark:inf}.
Take any $\rho$ such that $\kappa(1 + \rho) <1$ then apply Lemma 3.9 in \cite{FRANCI2022161}.
\end{proof}
{
\renewcommand{\thetheorem}{\ref{prop:counter}}
\begin{proposition}
There exists an $L$-smooth and $\mu$-strongly monotone $F$, and a sequence of iterates $\{z_t\}_{t \in \nnumbers}$ verifying the alpha descent condition with  $\alpha < 1$  such that  $z_t$ diverges for any $\eta$.
\end{proposition}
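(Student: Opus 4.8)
The plan is to realize both the operator $F$ and the diverging trajectory explicitly in $\reals^2$, following the construction indicated just before the statement. I would take $g = \id$ so that $\mathcal{Z} = \reals^2$, the surrogate minimum is the exact gradient step $z_t^\ast = z_t - \eta F(z_t)$, and $\ell_t^\ast = 0$. Let $F$ be the min-max operator of $f(x,y) = \tfrac{1}{2}x^2 + xy - \tfrac{1}{2}y^2$, which is linear, $F(z) = Mz$ with $M = \begin{pmatrix} 1 & 1 \\ -1 & 1 \end{pmatrix}$. I would then generate the iterates by gradient descent–ascent on the bilinear part $xy$ alone, i.e. $z_{t+1} = z_t - \eta R z_t$ with the pure rotation $R = \begin{pmatrix} 0 & 1 \\ -1 & 0 \end{pmatrix}$; note the decomposition $M = \id + R$.

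First I would check that $F$ has the required regularity. Strong monotonicity with $\mu = 1$ is immediate from $\inner{Mz}{z} = \|z\|^2$, and $L$-smoothness with $L = \sqrt{2}$ follows from $M^\top M = 2\,\id$. The unique solution $z_\ast = 0$ lies in the interior of $\reals^2$, so every hypothesis on $F$ is met. Next I would establish divergence of the trajectory: since $(\id - \eta R)^\top(\id - \eta R) = (1+\eta^2)\,\id$, we get $\|z_{t+1}\|^2 = (1+\eta^2)\|z_t\|^2$, hence $\|z_t\| \to \infty$ for every $\eta > 0$ whenever $z_1 \neq 0$.

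The core of the argument is verifying the $\alpha$-descent condition with a constant $\alpha < 1$ that is independent of $\eta$. Because $z_t^\ast = (\id - \eta M)z_t$ and $z_{t+1} = (\id - \eta R)z_t$, the key cancellation is $z_{t+1} - z_t^\ast = \eta(M - R)z_t = \eta z_t$, which gives $\ell_t(\theta_{t+1}) = \tfrac{1}{2}\|z_{t+1} - z_t^\ast\|^2 = \tfrac{\eta^2}{2}\|z_t\|^2$. On the other hand $\ell_t(\theta_t) = \tfrac{1}{2}\|\eta F(z_t)\|^2 = \tfrac{\eta^2}{2}\|Mz_t\|^2 = \eta^2\|z_t\|^2$, again using $M^\top M = 2\,\id$. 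Since $\ell_t^\ast = 0$, the descent ratio is exactly $\ell_t(\theta_{t+1})/\ell_t(\theta_t) = \tfrac{1}{2}$ for every $t$ and every $\eta$, so the condition holds with $\alpha = 1/\sqrt{2}$.

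The main obstacle, and the reason the construction is delicate, is forcing the descent ratio to be uniformly bounded away from $1$ across all $t$ and all stepsizes $\eta$, rather than merely for a single step or a single $\eta$. The example is engineered so that both $\ell_t(\theta_t)$ and $\ell_t(\theta_{t+1})$ are proportional to $\eta^2\|z_t\|^2$, which forces the ratio to be a dimensionless constant. The decomposition $M = \id + R$ is what makes this transparent: the surrogate target moves along the strongly monotone field $\id + R$, while the realized iterate moves along the purely rotational field $R$ alone, so their mismatch is exactly the identity component $\eta z_t$, and this mismatch never shrinks relative to $\ell_t(\theta_t)$. Recognizing this split is the only nontrivial step; the remaining computations are routine.
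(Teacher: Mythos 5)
Your proposal is correct and is essentially the paper's own construction: the same operator $F$ arising from $\min_x \max_y \tfrac{1}{2}x^2 + xy - \tfrac{1}{2}y^2$, the same iterates (gradient descent--ascent on the bilinear part $xy$ alone), and the same constant $\alpha = \nicefrac{1}{\sqrt{2}}$. The only differences are cosmetic: the paper packages the bias as $P = (\id - \alpha Q)\eta F$ with $Q$ a $45^\circ$ rotation and deduces divergence from the eigenvalues $1 \pm i\eta$, whereas you verify the descent ratio directly via $M - R = \id$ and $M^\top M = 2\id$ and obtain divergence from the exact norm growth factor $1+\eta^2$ --- the same key cancellation, differently bookkept.
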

}
\begin{proof}

Let us consider the simple min max example with loss
\[
    \min_x \max_y \frac{1}{2}x^2 + xy -\frac{1}{2}y^2.
\]
This problem can be written as VIP where $z = (x,y)$ and the operator $F$ is linear given by the matrix 
\[
F = \begin{bmatrix}
    1 & 1 \\
    -1 & 1
\end{bmatrix}.
\]
It is well-known that $F$ is both smooth and strongly monotone with $z_{t+1} = z_t -\eta F(z_t)$ being a contraction for small enough $\eta$~\citep{facchinei2003finite}.
Now if we consider a biased direction given by a matrix $P$, that is $z_{t+1} = z_t -P z_t$, then using the fact that $\ell_t^\ast =0$ the $\alpha$-descent on the surrogate corresponds to the following bound
\begin{align*}
    \|z_{t+1}-z_t + \eta F z_t\| = \|(\eta F -P) z_t\| \leq \alpha \|\eta F z_t\|.
\end{align*}
Despite being a contraction when we follow the true gradient $F$, the above min max loss causes rotations in the dynamics that are inherent to the adversarial nature of the problem. These rotations are carefully controlled by the stepsize and strong convexity/concavity of the loss.
Our counterexample simply adds a bit of rotation that ensures $\alpha <1 $ but yet is detrimental to the convergence.
Mathematically, we take $P = (\id -\alpha Q) \eta F$ where $Q$ is the rotation matrix
\[
Q = 
\begin{bmatrix}
    \frac{1}{\sqrt{2}}  & -\frac{1}{\sqrt{2}} \\
    \frac{1}{\sqrt{2}}  & \frac{1}{\sqrt{2}}. 
\end{bmatrix}
\]
With this construction we are guaranteed that $\alpha < 1$ since
\begin{align*}
    \|(\eta F -P) z_t\| = \|\alpha Q \eta F z_t\| =  \alpha\eta \|F z_t\|,
\end{align*}
where the last equality is due the the fact that $Q$ is an orthogonal matrix and therefore does not change the Euclidean norm of a vector.

Now taking $\alpha =\nicefrac{1}{\sqrt{2}}$ gives 
\begin{align*}
    P &= \left(\id - \alpha\begin{bmatrix}
    \frac{1}{\sqrt{2}}  & -\frac{1}{\sqrt{2}} \\
    \frac{1}{\sqrt{2}}  & \frac{1}{\sqrt{2}} 
\end{bmatrix}\right)
\begin{bmatrix}
    \eta & \eta \\
    -\eta & \eta
\end{bmatrix}.\\
&= \begin{bmatrix}
    \frac{1}{2}  & \frac{1}{2} \\
    -\frac{1}{2}  & \frac{1}{2} 
\end{bmatrix}
\begin{bmatrix}
    \eta & \eta \\
    -\eta & \eta
\end{bmatrix}=\begin{bmatrix}
    0 & \eta \\
    -\eta & 0.
\end{bmatrix}
\end{align*}
We have that $z_{t+1} = z_t - Pz_t = \begin{bmatrix}
    1 & -\eta \\
    \eta & 1
\end{bmatrix}z_t$ has an $\alpha = \nicefrac{1}{\sqrt{2}}$ but yet diverges for any $\eta > 0$ since the Eigenvalues of the linear system are $\lambda = 1 \pm i\eta$ therefore the spectral radius is strictly greater than one. 
Note that these dynamics are equivalent to gradient descent ascent on the bilinear loss $f(x,y) = xy$, which is known to diverge for any stepsize.

\end{proof}

{
\renewcommand{\thetheorem}{\ref{thm:stochastic}}
\begin{theorem}
Let $\mathcal{Z} = \reals^n$, and Assumption~(\ref{eq:exp-co}) hold.  
If $F_{\xi}(x)$  is $L$-smooth and $\eta \leq \frac{1}{2(1+c)L}$ where  $c \geq  2(1+\alpha^2)$  then any trajectory $\{\theta_t\}_{t\in \nnumbers}$ 
satisfying the $\alpha$-expected descent condition guarantees:
\[
\expect{\tfrac{1}{2}\|z_{t+1}-z_\ast\|^2} \leq \expect{\tfrac{1}{2}\|z_{t}-z_\ast\|^2}\left(1-\eta\mu + \alpha^2\right) +\eta^2(1+c)\sigma^2. 
\]
\end{theorem}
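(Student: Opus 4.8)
The plan is to mirror the deterministic argument (Theorem~\ref{thm:det}) but replace the exact projected gradient step by the stochastic one and control the induced bias in expectation. Since $\mathcal{Z}=\reals^n$, the surrogate $\Tilde{\ell}_t$ in~\eqref{eq:surr-stoch} attains its infimum $\Tilde{\ell}_t^\ast=0$ at the stochastic gradient step $\Tilde{z}_t^\ast := z_t-\eta F_{\xi_t}(z_t)$, and in fact $\Tilde{\ell}_t(\theta)=\tfrac12\|g(\theta)-\Tilde{z}_t^\ast\|^2$ identically. First I would record the stochastic analogue of~\eqref{eq:bias}: evaluating $\Tilde{\ell}_t(\theta_t)=\tfrac{\eta^2}{2}\|F_{\xi_t}(z_t)\|^2$ and invoking the $\alpha$-expected descent condition (Definition~\ref{eq:exp-bias}) together with Remark~\ref{remark:inf}, I obtain, after taking expectations over $\xi_t$ and the inner-loop randomness via the tower property,
\[
\expect{\|z_{t+1}-\Tilde{z}_t^\ast\|^2}\le \alpha^2\eta^2\,\expect{\|F_{\xi_t}(z_t)\|^2}.
\]

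Next I would decompose the error around the stochastic step. Writing $z_{t+1}=\Tilde{z}_t^\ast+e_t$ with $e_t:=z_{t+1}-\Tilde{z}_t^\ast$ and applying Young's inequality $\|a+b\|^2\le(1+\lambda)\|a\|^2+(1+\lambda^{-1})\|b\|^2$ with the specific choice $\lambda=\alpha^2$, the two error pieces combine cleanly: the factor $(1+\alpha^2)$ appears on $\|\Tilde{z}_t^\ast-z_\ast\|^2$, while $(1+\alpha^{-2})$ applied to the bias bound above collapses to $(1+\alpha^2)\eta^2\expect{\|F_{\xi_t}(z_t)\|^2}$. Expanding the unbiased step gives $\expect{\|\Tilde{z}_t^\ast-z_\ast\|^2}=\|z_t-z_\ast\|^2-2\eta\inner{F(z_t)}{z_t-z_\ast}+\eta^2\expect{\|F_{\xi_t}(z_t)\|^2}$, so that the coefficient in front of $\eta^2\expect{\|F_{\xi_t}(z_t)\|^2}$ aggregates to exactly $2(1+\alpha^2)$.

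Then I would discharge the second-moment term using expected co-coercivity (Assumption~\ref{eq:exp-co}), $\expect{\|F_{\xi_t}(z_t)\|^2}\le 2L\inner{F(z_t)}{z_t-z_\ast}+2\sigma^2$, converting the aggregate into a multiple of the inner product plus the $\sigma^2$ noise. Collecting the inner-product terms leaves the coefficient $-2\eta(1+\alpha^2)(1-2\eta L)$, which is nonpositive once $\eta L\le \tfrac12$; I would then bound the inner product below by $\mu\|z_t-z_\ast\|^2$ using strong monotonicity (Assumption~\ref{assumption:vi}) together with $F(z_\ast)=0$ in the unconstrained case. The remaining bookkeeping is to check that $\eta\le\frac{1}{2(1+c)L}$ with $c\ge 2(1+\alpha^2)$ forces $2(1+\alpha^2)(1-2\eta L)\ge 1$, which reduces the contraction coefficient from $1+\alpha^2$ to $1-\eta\mu+\alpha^2$, and that the noise coefficient $4(1+\alpha^2)\eta^2\sigma^2$ is dominated by $2\eta^2(1+c)\sigma^2$; both follow from $c\ge 2(1+\alpha^2)$, and dividing by two yields the stated inequality.

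The main obstacle is the cross term coupling the bias $e_t$ with the stochastic step $\Tilde{z}_t^\ast-z_\ast$: since $e_t$ is produced by an unspecified inner optimizer, its conditional mean is uncontrolled, so a naive expansion leaves an intractable $\expect{\inner{\Tilde{z}_t^\ast-z_\ast}{e_t}}$. The key idea that resolves this is choosing the Young parameter $\lambda=\alpha^2$ rather than an arbitrary constant, which is exactly what makes the bias contribution $(1+\alpha^{-2})\alpha^2=1+\alpha^2$ match the step's own $\eta^2\expect{\|F_{\xi_t}(z_t)\|^2}$ coefficient and produces the clean additive $+\alpha^2$ in the final rate. Care is also needed with the nested conditional expectations, since the $\alpha$-expected descent bound holds conditionally on the draw $F_{\xi_t}(z_t)$ and must be combined with the expectation over $\xi_t$ before co-coercivity is applied.
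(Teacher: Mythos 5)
Your proof is correct, and the constant bookkeeping all checks out: the aggregate coefficient $2(1+\alpha^2)$ on $\eta^2\expect{\|F_{\xi_t}(z_t)\|^2}$, the collected inner-product coefficient $-2\eta(1+\alpha^2)(1-2\eta L)$, the implication that $\eta \leq \tfrac{1}{2(1+c)L}$ with $c \geq 2(1+\alpha^2)$ gives $2(1+\alpha^2)(1-2\eta L) \geq \tfrac{4}{3} \geq 1$, and $2(1+\alpha^2)\leq 1+c$ for the noise term. Your route, however, genuinely differs from the paper's in how the bias is absorbed. The paper expands $\|z_{t+1}-z_\ast\|^2$ around $z_t$ (writing $z_{t+1}=z_t-p_t$), isolates the cross term $\inner{\eta F_{\xi_t}(z_t)-p_t}{z_t-z_\ast}$, bounds it by Cauchy--Schwarz, and therefore needs a \emph{first-moment} bias bound $\expect{\|z_{t+1}-z_t+\eta F_{\xi_t}(z_t)\|}\leq \alpha\eta\|F_{\xi_t}(z_t)\|$ obtained via Jensen's inequality (concavity of the square root); the $\alpha^2$ then enters additively through a Young split of the product $\alpha\eta\|F_{\xi_t}(z_t)\|\,\|z_t-z_\ast\|$, and a separate estimate $\expect{\|z_{t+1}-z_t\|^2}\leq 2(1+\alpha^2)\eta^2\|F_{\xi_t}(z_t)\|^2$ handles $\|p_t\|^2$ (this is where $c\geq 2(1+\alpha^2)$ is invoked). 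You instead expand around the stochastic step $\Tilde{z}_t^\ast$ and eliminate the cross term outright with Young's inequality at parameter $\lambda=\alpha^2$, so you only ever need the second-moment bias bound; this avoids both the Jensen step and the separate $\|p_t\|^2$ estimate, and your intermediate noise constant $2(1+\alpha^2)\eta^2\sigma^2$ is in fact slightly sharper than $(1+c)\eta^2\sigma^2$ before you relax it to match the statement. The price is that in your version the factor $(1+\alpha^2)$ multiplies the entire leading term and inner product rather than appearing additively, but both routes collapse to the same $(1-\eta\mu+\alpha^2)$ contraction. One small caveat worth a sentence in a final write-up: your Young parameter degenerates at $\alpha=0$ (where $1+\lambda^{-1}=\infty$); that case is trivial, since expected descent with $\alpha=0$ forces $z_{t+1}=\Tilde{z}_t^\ast$ almost surely, but it should be noted separately.
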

}
\begin{proof}
For convenience let $F_{\xi_t}(z_t) = \hat{F}_t$.
Let $z_t^\ast = z_t -\eta \hat{F}_t$ and $p_t = z_t-z_{t+1}$, $\expect[\theta_{t+1}]{\cdot}$ denote the expectation over $z_{t+1} =g(\theta_{t+1})$ given $z_t$, and $\xi_t$.
The expected $\alpha$-descent condition in the unconstrained case is equivalent to
\[
    \expect[\theta_{t+1}]{\|z_{t+1}-z_t + \eta \hat{F}_t\|^2} \leq \alpha^2 \eta^2 \|\hat{F}_t\|^2.
\]
By concavity of the square root and the  expected $\alpha$-descent condition we have $\expect{\sqrt{X}} \leq \sqrt{\expect{X}}$ and
\begin{align}
    \expect[\theta_{t+1}]{\|z_{t+1}-z_t +\eta \hat{F}_t\|} &= \expect[\theta_{t+1}]{\sqrt{\|z_{t+1}-z_t +\eta \hat{F}_t\|^2}} \\
    &\leq \sqrt{\expect{\|z_{t+1}-z_t +\eta \hat{F}_t\|^2}} \\
    &\leq \alpha \eta \|\hat{F}_t\|.\label{stoch_ineq_1}
\end{align}
We also have the following bound
\begin{align}
    \expect[\theta_{t+1}]{\|z_{t+1}-z_t\|^2} &= \expect[\theta_{t+1}]{\|z_{t+1}-z_t + \eta \hat{F}_t -  \eta \hat{F}_t\|^2}  \\ &\leq \expect[\theta_{t+1}]{2\|z_{t+1}-z_t + \eta \hat{F}_t\|^2 + 2\|\eta \hat{F}_t\|^2}\\
    &\leq 2(1+\alpha^2)\eta^2\|\hat{F}_t\|^2.\label{stoch_ineq_2}
\end{align}

Next we expand and bound the distance $\frac{1}{2}\|z_{t+1}-z_\ast\|^2$.

    \begin{align*}
   \frac{1}{2}\|z_{t+1}-z_\ast\|^2 &=\frac{1}{2}\| z_{t}-p_t-z_\ast\|^2 \\
    &= \frac{1}{2}\|z_{t}-z_\ast\|^2 -\inner{p_t}{z_t-z_\ast} + \frac{1}{2}\|p_t\|^2\\
    &=\frac{1}{2}\|z_{t}-z_\ast\|^2 -\eta\inner{\hat{F}_t}{z_t-z_\ast} +\inner{\eta \hat{F}_t-p_t}{z_t-z_\ast} + \frac{1}{2}\|p_t\|^2\\
    &\leq \frac{1}{2}\|z_{t}-z_\ast\|^2 -\eta\inner{\hat{F}_t}{z_t-z_\ast} +\|\eta \hat{F}_t-p_t\|\|z_t-z_\ast\| + \frac{1}{2}\|p_t\|^2\\
    &= \frac{1}{2}\|z_{t}-z_\ast\|^2 -\eta\inner{\hat{F}_t}{z_t-z_\ast} +\|z_{t+1}-z_t+ \eta \hat{F}_t\|\|z_t-z_\ast\| + \frac{1}{2}\|z_{t+1}-z_t\|^2
    \end{align*}
Taking an expectation over $z_{t+1}$ given $z_t$, and $\xi_t$ we have 
    \begin{align*}
    \expect[\theta_{t+1}]{\frac{1}{2}\|z_{t+1}-z_\ast\|^2 } \overset{(\ref{stoch_ineq_1}, \ref{stoch_ineq_2})}&{\leq} \frac{1}{2}\|z_{t}-z_\ast\|^2 -\eta\inner{\hat{F}_t}{z_t-z_\ast} +\eta \alpha \|\hat{F}_t\|\|z_t-z_\ast\|+ \frac{2(1+\alpha^2) \eta^2}{2}\|\hat{F}_t\|^2\\
    &\leq \frac{1}{2}\|z_{t}-z_\ast\|^2 -\eta\inner{\hat{F}_t}{z_t-z_\ast} +\eta \alpha \|\hat{F}_t\|\|z_t-z_\ast\|+ \frac{c \eta^2}{2}\|\hat{F}_t\|^2\\
    &\leq \frac{1}{2}\|z_{t}-z_\ast\|^2 -\eta\inner{\hat{F}_t}{z_t-z_\ast} +\frac{\alpha^2}{2} \|z_t-z_\ast\|^2+ \frac{\eta^2(1+c)}{2}\|\hat{F}_t\|^2
\end{align*}
where in the inequality we used (\ref{stoch_ineq_1}, \ref{stoch_ineq_2}) and in the second inequality $c \geq 2(1+\alpha^2)$.
Taking an expectation given information up to time $t$  and using the expected co-coercivity assumption gives,
\begin{align*}
    \expect[t]{\frac{1}{2}\|z_{t+1}-z_\ast\|^2} &\leq \frac{1}{2}\|z_{t} -z_\ast\|^2 -\eta\inner{F_t}{z_t-z_\ast} +\frac{\alpha^2}{2}\|z_t-z_\ast\|^2+ \frac{\eta^2(1+c)}{2} \expect[t]{\|\hat{F}_t\|^2 }\\
    \overset{\eqref{eq:exp-co}}&{\leq} \frac{1}{2}\|z_{t}-z_\ast\|^2 -\eta\inner{F_t}{z_t-z_\ast} + \frac{\alpha^2}{2}\|z_t-z_\ast\|^2+ \eta^2(1+c) L\inner{F_t}{z_t-z_\ast} + \eta^2(1+c)\sigma^2 \\
    &= \frac{1}{2}\|z_{t}-z_\ast\|^2 -\eta\left(1-\eta(1+c)L\right)\inner{F_t}{z_t-z_\ast} + \frac{\alpha^2}{2}\|z_t-z_\ast\|^2+ \eta^2(1+c)\sigma^2 \\ 
    \overset{\eta (1+c)L< 1}&{\leq} \frac{1}{2}\|z_{t}-z_\ast\|^2 -\eta\mu\left(1-\eta(1+c)L\right)\|z_t-z_\ast\|^2 + \frac{\alpha^2}{2}\|z_t-z_\ast\|^2+ \eta^2(1+c)\sigma^2 
\end{align*}
If we take $\eta < \frac{1}{2(1+c)L}$ then we have the following recursion
\[
  \expect[t]{\frac{1}{2}\|z_{t+1}-z_\ast\|^2} \leq \frac{1}{2}\|z_{t}-z_\ast\|^2\left(1-\eta \mu + \alpha^2\right) + \eta^2(1+c)\sigma^2.
\]
Taking an expectation over $z_t$ gives the result.
\end{proof}

\section{Non-linear least-squares}

\begin{lemma}\label{thm:smooth-surr}
Let $f(\theta) = \frac{1}{2}\|r(\theta)\|^2$ where $r: \reals^d \to \reals^n$, $r(\theta) = (r_i(\theta),\cdots, r_n(\theta))^\top$.
    Suppose  $r_i(\theta)$ is $\beta$-smooth and $Dr(\theta)^\top$ has singular values that are globally upper bounded by $\sigma_{\max} >0$.
    If $r$ is bounded,  then $f$ is $L$-smooth: there exists $L \geq 0$  such that $\|\nabla f(\theta) -\nabla f(\theta')\| \leq L\|\theta -\theta'\|$ for any $\theta, \theta' \in\reals^d$.
\end{lemma}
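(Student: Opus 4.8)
The plan is to work directly from the explicit gradient of the least-squares objective and reduce the claim to two Lipschitz estimates. Since $f(\theta) = \frac{1}{2}\|r(\theta)\|^2$, the chain rule gives $\nabla f(\theta) = Dr(\theta)^\top r(\theta)$. First I would form the difference of gradients at two points and insert a cross term to split it into a part that isolates the change in $r$ and a part that isolates the change in the Jacobian:
\[
\nabla f(\theta) - \nabla f(\theta') = Dr(\theta)^\top\big(r(\theta)-r(\theta')\big) + \big(Dr(\theta)^\top - Dr(\theta')^\top\big) r(\theta').
\]
The goal is then to bound each summand by a constant multiple of $\|\theta-\theta'\|$ and add the constants to obtain $L$.

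For the first term, I would use that the singular values of $Dr(\theta)^\top$ (equivalently, of $Dr(\theta)$) are globally bounded by $\sigma_{\max}$, so the operator norm satisfies $\|Dr(\theta)^\top\| \leq \sigma_{\max}$ for all $\theta$. This controls the matrix factor and also makes $r$ itself Lipschitz: integrating $Dr$ along the segment from $\theta'$ to $\theta$ (mean value inequality) yields $\|r(\theta)-r(\theta')\| \leq \sigma_{\max}\|\theta-\theta'\|$. Hence the first summand is bounded by $\sigma_{\max}^2\,\|\theta-\theta'\|$.

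For the second term, I would use that $r$ is bounded, say $\|r(\theta')\| \leq M := \sup_{\theta}\|r(\theta)\| < \infty$, to pull the residual out, leaving $\|Dr(\theta)^\top - Dr(\theta')^\top\|\,M$. The remaining factor is where the $\beta$-smoothness of each component enters: the $i$-th row of $Dr$ is $\nabla r_i^\top$, and $\beta$-smoothness gives $\|\nabla r_i(\theta)-\nabla r_i(\theta')\| \leq \beta\|\theta-\theta'\|$. Bounding the operator norm by the Frobenius norm,
\[
\|Dr(\theta)-Dr(\theta')\| \leq \Big(\textstyle\sum_{i=1}^n \|\nabla r_i(\theta)-\nabla r_i(\theta')\|^2\Big)^{1/2} \leq \sqrt{n}\,\beta\,\|\theta-\theta'\|,
\]
so the second summand is bounded by $\sqrt{n}\,\beta\,M\,\|\theta-\theta'\|$. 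Combining the two estimates via the triangle inequality gives the result with $L = \sigma_{\max}^2 + \sqrt{n}\,\beta\,M$.

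The argument is essentially bookkeeping once the cross-term split is chosen, so there is no deep obstacle; the only points requiring care are (i) correctly deducing global Lipschitzness of $r$ from the \emph{bounded-Jacobian} hypothesis rather than assuming it outright, and (ii) making sure the boundedness of $r$ is genuinely needed and is the sole place the ``$r$ bounded'' hypothesis is used, which is what keeps the coefficient of the second term finite. I would finally remark that this is precisely the place where all three hypotheses of the lemma are consumed, matching the discussion preceding Proposition \ref{thm:unbounded} that these cannot simultaneously hold with a lower singular-value bound.
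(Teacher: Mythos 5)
Your proof is correct and follows essentially the same route as the paper's: the identical cross-term decomposition of $\nabla f(\theta)-\nabla f(\theta')$, the operator-norm bound $\sigma_{\max}$ together with the mean value inequality to get Lipschitzness of $r$, and boundedness of $r$ to control the Jacobian-difference term. The only (immaterial) difference is bookkeeping in that last term: you bound $\|Dr(\theta)-Dr(\theta')\|$ by the Frobenius norm ($\sqrt{n}\,\beta$, paired with $\sup_\theta\|r(\theta)\|_2$), whereas the paper uses the mixed $(1,2)$ operator norm ($\beta$, paired with $\sup_\theta\|r(\theta)\|_1$), giving constants of the same order.
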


\begin{proof}
Since each $r_i$ is $\beta$-smooth we have that
    \begin{align*}
    \|Dr(\theta)^\top-Dr(\theta')^\top\|_{1,2} &= \sup\{\|(Dr(\theta)^\top-Dr(\theta')^\top)v\|_{1}: v \in \reals^n , \|v\|_2\leq 1\} \\ 
    &=\max_{i}\|\nabla g_i(\theta)-\nabla g_i(\theta')\| \leq L\|\theta-\theta'\|.    
    \end{align*}
    \begin{align*}
        \|\nabla f(\theta)-\nabla f(\theta')\| &= \|Dr(\theta)^\top r(\theta) - Dr(\theta')^\top r(\theta')\| \\
        &= \|Dr(\theta)^\top r(\theta) -Dr(\theta)^\top r(\theta')+ Dr(\theta)^\top r(\theta') - Dr(\theta')^\top r(\theta')\|\\
        &\leq \|Dr(\theta)^\top r(\theta) -Dr(\theta)^\top r(\theta')\| + \|Dr(\theta)^\top r(\theta') - Dr(\theta')^\top r(\theta')\|\\
        &\leq \sigma_{\max}\|r(\theta)-r(\theta')\| + \|Dr(\theta)^\top - Dr(\theta')^\top\|_{1,2}\|r(\theta')\|_1\\
        &\leq \sigma_{\max}^2\|\theta - \theta'\|+\beta C\|\theta-\theta'\|.
    \end{align*}
    Where the last inequality follows from $r$ being $\sigma_{\max}$ Lipschitz since $\|r(\theta) - r(\theta')\| \leq \|Dr(\theta)\|\|\theta-\theta'\|$ by the mean value inequality~\citep{hormander2007analysis} and by the fact that $\|Dr(\theta)^\top\| \leq \sigma_{\max}$.
    Since $r$ is bounded it follows that there exists $\|r(\theta)\|_1 \leq C$ for all $\theta$.
\end{proof}

\begin{definition}[\cite{polyak1964gradient,lojasiewicz1963propriete}]\label{def:pl}
A function $f: \reals^n \to \reals$ 
satisfies the PL condition if there exists $\mu > 0$
such that for all $x \in \reals^n$
\begin{align}
   \|\nabla f(x)\|^2 \geq 2 \mu (f(x) -f^\ast),
\end{align}
where $f^\ast = \inf_{x \in \reals^n}f(x)$. 
\end{definition}

{
\renewcommand{\thetheorem}{\ref{thm:pl}}
\begin{proposition}
    Assume $f$ satisfies the $\mu$-PL condition, and let $\sigma_{\min}$ be a lower bound on the singular values of $Dg(\theta)^\top$. Then, we have $f \circ g$ is $\mu \sigma^2_{\min}$-PL.
\end{proposition}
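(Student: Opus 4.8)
The plan is to unwind the definition of the PL condition through the chain rule and the singular-value bound; the argument is short and the only genuine subtlety is bookkeeping of the correct infimum. Recall that $f$ being $\mu$-PL means $\tfrac12\|\nabla f(z)\|^2 \ge \mu\big(f(z) - f^\ast\big)$ for all $z$, where $f^\ast = \inf_{z \in \reals^n} f(z)$. The goal is to establish the analogous inequality $\tfrac12\|\nabla (f\circ g)(\theta)\|^2 \ge \mu\sigma_{\min}^2\big((f\circ g)(\theta) - (f\circ g)^\ast\big)$ for all $\theta$, where $(f\circ g)^\ast = \inf_{\theta \in \reals^d} f(g(\theta))$.

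First I would apply the chain rule to write $\nabla(f\circ g)(\theta) = Dg(\theta)^\top \nabla f(g(\theta))$. Then, taking the vector $x = \nabla f(g(\theta)) \in \reals^n$ and invoking the lower singular-value bound on $Dg(\theta)^\top$ as stated in the notation, namely $\inner{x}{Dg(\theta)Dg(\theta)^\top x} \ge \sigma_{\min}^2\|x\|^2$, which is exactly $\|Dg(\theta)^\top x\|^2 \ge \sigma_{\min}^2\|x\|^2$, I obtain $\|\nabla(f\circ g)(\theta)\|^2 \ge \sigma_{\min}^2\|\nabla f(g(\theta))\|^2$.

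Next I would apply the $\mu$-PL property of $f$ at the point $z = g(\theta)$ to bound $\tfrac12\|\nabla f(g(\theta))\|^2 \ge \mu\big(f(g(\theta)) - f^\ast\big)$. Chaining the two inequalities gives $\tfrac12\|\nabla(f\circ g)(\theta)\|^2 \ge \sigma_{\min}^2 \mu \big(f(g(\theta)) - f^\ast\big)$.

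The one point requiring care — which I would call the main (if minor) obstacle — is replacing the global infimum $f^\ast$ over $\reals^n$ by the restricted infimum $(f\circ g)^\ast$ over parameter space. Since $g$ need not be surjective, $\{g(\theta):\theta\in\reals^d\}$ is a subset of $\reals^n$, so $(f\circ g)^\ast = \inf_\theta f(g(\theta)) \ge \inf_{z} f(z) = f^\ast$. Consequently $f(g(\theta)) - f^\ast \ge f(g(\theta)) - (f\circ g)^\ast = (f\circ g)(\theta) - (f\circ g)^\ast$, and crucially this inequality points in the favourable direction. Substituting yields $\tfrac12\|\nabla(f\circ g)(\theta)\|^2 \ge \mu\sigma_{\min}^2\big((f\circ g)(\theta) - (f\circ g)^\ast\big)$, which is precisely the claimed $\mu\sigma_{\min}^2$-PL property.
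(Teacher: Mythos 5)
Your proposal is correct and follows essentially the same route as the paper's proof: chain rule, the lower singular-value bound applied to $\nabla f(g(\theta))$, the $\mu$-PL inequality of $f$ at $z=g(\theta)$, and finally the observation that the global infimum of $f$ lower-bounds the infimum over the image of $g$, so the inequality chains in the favourable direction. If anything, your direct comparison $f^\ast \le (f\circ g)^\ast$ is slightly cleaner than the paper's detour through $\min_{z'\in\mathcal{Z}} f(z')$, which implicitly relies on continuity of $f$ and density of the image in $\mathcal{Z}$.
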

}
\begin{proof}
We have that 
\begin{align*}
 || \nabla f \circ g (\theta)||^2 &= ||Dg(\theta)^\top \nabla_{z=g(\theta)} f(z)||^2 \\
 &=\nabla f(z)^\top D(g(\theta)) D(g(\theta))^\top \nabla f(z)\\
 &\geq \sigma_{\min}^2||\nabla f(z)||^2   \\
 &\geq \sigma_{\min}^2 \mu(f(z)-\inf_{z'} f(z')) \\
    &\geq \sigma_{\min}^2 \mu(f(z)-\min_{z' \in \mathcal{Z}} f(z')) \\
 &= \sigma_{\min}^2 \mu (f(g(\theta))- \inf_{\theta'}f(g(\theta'))).
\end{align*}
\end{proof}

\begin{proposition}
    If $g: \reals^d \to \reals^n$ $g(\theta) = (g_1(\theta), \cdots, g_n(\theta))^\top$ is differentiable where $g_i$ is $\beta$-smooth and $Dg(\theta)^\top$ has globally lower and upper bounded singular values, then  $\{g(\theta): \theta\in \reals^d\}$ is unbounded.
\end{proposition}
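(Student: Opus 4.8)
The plan is to reduce the claim to a statement about a single scalar function obtained by projecting the output onto a fixed direction. First I would record what the lower-bound hypothesis buys us: the lower singular-value bound on $Dg(\theta)^\top$ means exactly that $\|Dg(\theta)^\top x\| \ge \sigma_{\min}\|x\|$ for every $x \in \reals^n$ and every $\theta \in \reals^d$. Only this lower bound is genuinely needed for the conclusion; the upper bound and $\beta$-smoothness will enter later solely to control a trajectory. In particular the bound forces $Dg(\theta)^\top$ to be injective and hence $d \ge n$, but I would not use this fact directly.

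Next, fix any unit vector $u \in \reals^n$ and define the scalar function $\phi(\theta) = \inner{u}{g(\theta)}$. Its gradient is $\nabla\phi(\theta) = Dg(\theta)^\top u$, so the lower bound gives $\|\nabla\phi(\theta)\| \ge \sigma_{\min} > 0$ uniformly in $\theta$. The heart of the argument is that a uniformly bounded-below gradient norm does not by itself bound $\phi$ from above along an arbitrary path, but it does force $\phi \to +\infty$ along a gradient-ascent trajectory. To set this up I would check that $\phi$ is $L_\phi$-smooth with $L_\phi \le \beta\|u\|_1 \le \beta\sqrt{n}$, which follows directly from the $\beta$-smoothness of each component, since $\nabla\phi = \sum_{i=1}^n u_i \nabla g_i$ is a finite combination of $\beta$-Lipschitz gradients.

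Then I would run gradient ascent $\theta_{k+1} = \theta_k + \tfrac{1}{L_\phi}\nabla\phi(\theta_k)$ and apply the descent lemma for $L_\phi$-smooth functions to obtain $\phi(\theta_{k+1}) \ge \phi(\theta_k) + \tfrac{1}{2L_\phi}\|\nabla\phi(\theta_k)\|^2 \ge \phi(\theta_k) + \tfrac{\sigma_{\min}^2}{2L_\phi}$. Iterating yields $\phi(\theta_k) \ge \phi(\theta_0) + k\,\tfrac{\sigma_{\min}^2}{2L_\phi} \to \infty$. Finally, Cauchy--Schwarz together with $\|u\| = 1$ gives $\|g(\theta_k)\| \ge \inner{u}{g(\theta_k)} = \phi(\theta_k) \to \infty$, so the image $\{g(\theta) : \theta \in \reals^d\}$ contains points of arbitrarily large norm and is therefore unbounded.

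The step I expect to require the most care is the conceptual one in the second paragraph: a nonvanishing gradient is not on its own enough to conclude unboundedness, because a bounded function can have a nonvanishing gradient whose direction rotates. The argument therefore needs a genuine monotone-progress mechanism, and the descent lemma supplies exactly this, which is precisely where the $\beta$-smoothness hypothesis is used. The upper singular-value bound is not strictly necessary for this route, although one could use it instead to follow a continuous gradient flow of bounded speed and rule out finite-time blow-up.
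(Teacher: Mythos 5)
Your proof is correct, and it takes a genuinely different route from the paper's. The paper argues by contradiction: assuming the image is bounded, it picks a point $v \notin \mathcal{Z} = \closure{\{g(\theta):\theta\in\reals^d\}}$, forms the squared-distance function $f(\theta) = \tfrac{1}{2}\|g(\theta)-v\|^2$, invokes its auxiliary lemma on smoothness of nonlinear least-squares objectives (which is where the \emph{upper} singular-value bound and the assumed boundedness of the image are consumed) to get $L$-smoothness of $f$, and then combines the inequality $\|\nabla f(\theta)\|^2/(2L) \le f(\theta) - \inf f$ with the lower singular-value bound to force a minimizing sequence $g(\theta_t)$ to converge to $v$, contradicting $v \notin \mathcal{Z}$. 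You instead argue directly: projecting onto a fixed unit direction $u$ gives a scalar function $\phi = \inner{u}{g(\cdot)}$ that is $\beta\|u\|_1$-smooth using only the componentwise smoothness, has $\|\nabla \phi\| \ge \sigma_{\min}$ everywhere by the lower bound, and therefore gains at least $\sigma_{\min}^2/(2L_\phi)$ per gradient-ascent step, so $\inner{u}{g(\theta_k)} \to \infty$. Your route is constructive, bypasses the paper's smoothness lemma entirely, and proves a strictly stronger statement under strictly weaker hypotheses: the upper singular-value bound is never used, and you get unboundedness of the image in \emph{every} direction $u$ (so the closed convex hull of the image is all of $\reals^n$), not merely unboundedness. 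One small quibble with your closing remark: for a merely differentiable function, a uniform lower bound $\|\nabla\phi\|\ge\sigma_{\min}>0$ already contradicts boundedness above (by Ekeland's variational principle, a bounded differentiable function has points of arbitrarily small gradient norm), so smoothness is not conceptually indispensable; but it is what makes your elementary, quantitative argument work, and the proof as written is sound.
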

\begin{proof}
We prove by contradiction. Suppose $\{g(\theta): \theta\in \reals^d\}$ is bounded and denote $\mathcal{Z}$ as its closure which is also bounded. Then there exists $v \in \reals^n \notin \mathcal{Z}$. 
Consider the function $f(\theta) = \frac{1}{2}\|g(\theta)-v\|^2$. 
Taking $r(\theta) = g(\theta) -v$, we have $Dr = Dg$, and $r_i$ are still $\beta$-smooth, therefore by Lemma \ref{thm:smooth-surr} $f$ is $L$-smooth. Since $f$ is continuous and $\mathcal{Z}$ is compact, its minimum is attained, and thus there exists $z_\ast \in \mathcal{Z}$ such that
\[
    \frac{1}{2}\|z_\ast-v\|^2 = \inf_{\theta \in \reals^d}f(\theta).
\]
Moreover, there exists a sequence $\{\theta_t\}_{t\in\nnumbers}$ such that $g(\theta_t) \to z_\ast$ since $\mathcal{Z}$ is closed.
By the smoothness of $f$ it also follows that 
\[
    \frac{\|\nabla f(\theta_t)\|^2}{2L} \leq f(\theta_t) -\inf_{\theta \in \reals^d}f(\theta).
\]
Since singular values of $Dg(\theta)^\top$ are lower-bounded, there exists $\sigma_{\min} > 0$ such that $\|Dg(\theta)^\top v\| \geq \sigma_{\min}\|v\|$.
Therefore we have
\begin{align*}
    0 = \lim_{t}f(\theta_t) -\inf_{\theta \in \reals^d}f(\theta) &\geq   \lim_t\frac{\|\nabla f(\theta_t)\|^2}{2L} \\ 
    &= \lim_t\frac{\|Dg(\theta_t)^\top (g(\theta_t)-v)\|^2}{2L} \geq \lim_t\frac{\sigma_{\min}^2 }{2L}\|g(\theta_t) -v\|^2,
\end{align*}
which implies that $z_\ast =\lim_t g(\theta_t) =v$ and that $v$ is indeed within $\mathcal{Z}$ a contradiction!

\end{proof}

\section{Min max experiments}

\begin{figure}
    \centering
    \includegraphics[width=0.5\linewidth]{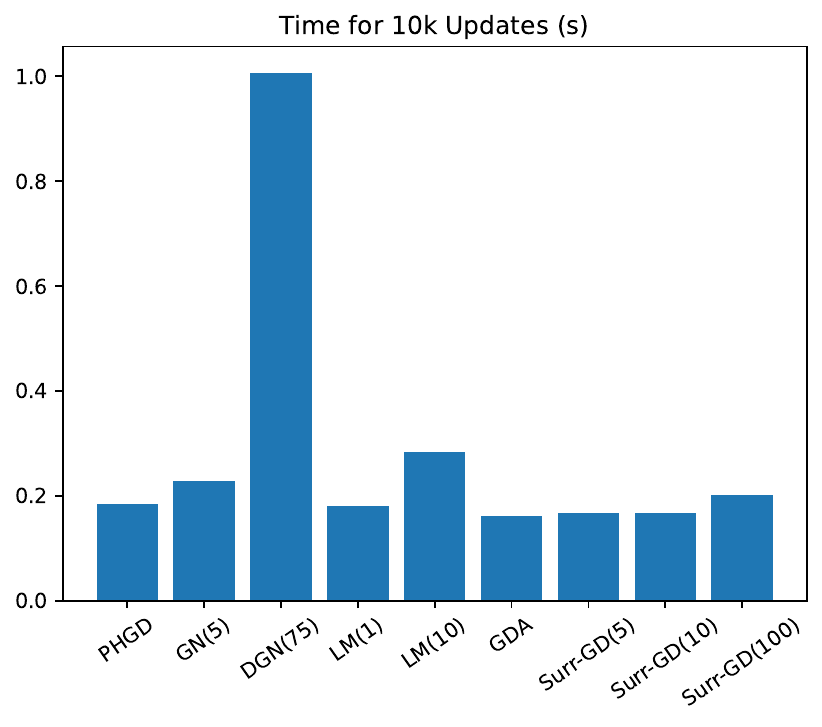}
    \caption{Time in seconds for performing 10,000 iterations of each method. The number in parenthesis correspond to number of inner steps taken. As a special case we have PHGD and GDA are equivalent to GN(1) and Surr-GD(1) respectively.}
    \label{fig:pennies-time}
\end{figure}

\begin{figure}
    \centering
    \includegraphics[width=0.8\linewidth]{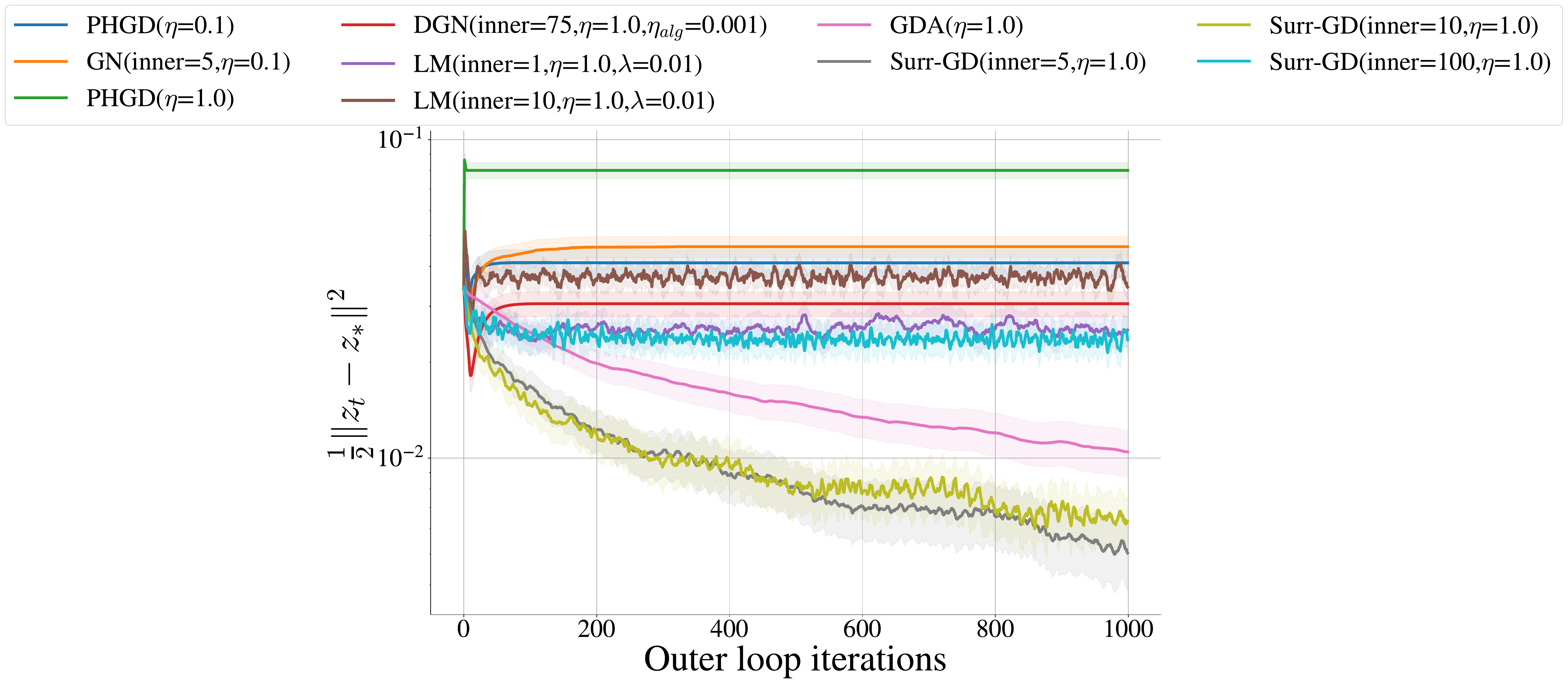}
    \caption{Average convergence for various surrogate methods from Section \ref{sec:non-linear} in the hidden matching pennies game. 
    The average was computed over 1,000 trajectories each with randomly sampled parameters $\alpha_j^i$ and $\theta_0$.
    }
    \label{fig:pennies-rand}
\end{figure}

\subsection{Hidden Matching Pennies}\label{sec:appendix-pennies}

In the hidden matching pennies game each player $i$ has the parameterization $h^i(\theta) = \sigmoid(\alpha_2^i\celu(\alpha^i_1 \theta))$.
The parameters for each player are:
\begin{align*}
    &\alpha_1^1 = 0.5 , \text{ and } \alpha_2^1 = 1 \text{ for player } 1,\\
    &\alpha_1^2 = 0.7 , \text{ and } \alpha_2^2 = 1 \text{ for player } 2.
\end{align*}
The parameters $\alpha_1^i, \alpha_2^i$, were chosen to approximately replicate the trajectory of PHGD presented in \citet[Figure 4]{sakos2024exploiting}.
The initial parameters $\theta_{1}$ were selected to be the same as those selected by \citet{sakos2024exploiting}, $\theta_1 = (\theta_1^1, \theta_1^2) = (1.25, 2.25)$.

In Figure \ref{fig:pennies-time} we report the runtime for different surrogate algorithms and number of inner steps.
In Figure \ref{fig:pennies-rand}, the average distance across 100 random initializations is observered. 
$\theta_0$ was randomly sampled from two independent Gaussians with  standard deviation of $4$. Similarly, each $\alpha_j^i$ was randomly sampled from a uniform random variable with support $[-1, 1]$.

\begin{figure}
    \centering
    \includegraphics[width=0.75\linewidth]{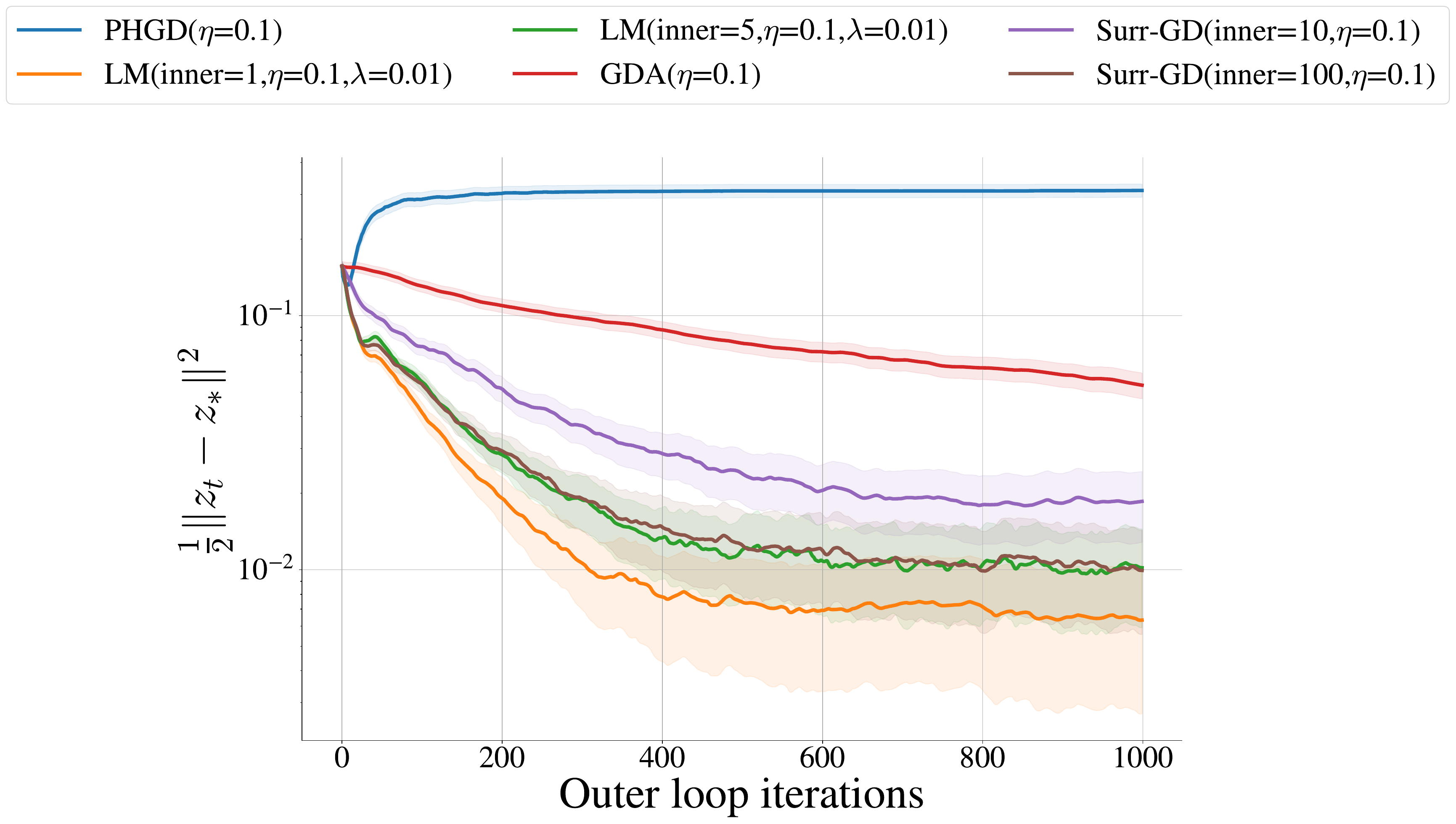}
    \caption{Average convergence for various surrogate methods from Section \ref{sec:non-linear} in the hidden rock-paper-scissors game. 
    The average was computed over 1,000 trajectories each with randomly sampled parameters $A_j^i$ and $\theta_0$.}
    \label{fig:rps-rand}
\end{figure}

\subsection{Hidden Rock-Paper-Scissors}\label{sec:appendix-rps}
Each player i's mixed strategy in the hidden rock-paper-scissors game is parameterized by the function $h^i(\theta) = \softmax(A_2^i \celu(A_1^i \theta^i))$, with  $\theta^i \in \reals^5$.
The min-max objective  is $f(z^1,z^2) = \inner{z^1}{A z^2} + \frac{0.2}{2}\|z-z_\ast\|^2$, where $z = (z^1,z^2)$ and $z_\ast$ is the equilibrium of rock-paper-scissors (sample rock, paper, and scissors with equal probability)~\citep{sakos2024exploiting}.
In Figure \ref{fig:rps-rand} we observe the average convergence across 100 random initializations of $\theta_1 \in \reals^5$ and matrixes $A_j^i$ for various surrogate based methods.
$\theta_1$ was sampled from a 5-dimensional isotropic Gaussian while each element of $A_j^i$ was sampled from a uniform random variable with support $[-1,1]$.

\section{Policy Evaluation and Projected Bellman Error}\label{sec:appendix-PBE}

In this section we provide some background on minimizing projected Bellman error (PBE) and the stochastic methods proposed by \citet{bertsekas2009projected}.
For a given MDP with $n$ states $(s^1, \cdots, s^n)$, and policy $\pi$, we denote the expected reward $r_\pi \in \reals^n$, discount factor $\gamma \in (0,1)$, and state to state probability transition matrix $P_\pi \in \reals^{n \times n}$.
The associated linear Bellman operator is $T_{\pi}(z) = r_{\pi} +\gamma P_{\pi}z$ and is contraction~\cite{bertsekas2012dynamic}. The value vector $v_{\pi} \in \reals^n$ is the unique fixed point of $T_{\pi}$, with i\textsuperscript{th} component $v^i$ equals to the discounted return starting from state $s^i$, $\expect[\tau = (s_1, s_2, \cdots)]{\sum_{t=1}^\infty \gamma ^t r_{\pi}(s_t)|s_1 = s^i}$, where we use $r_{\pi}(s)$ to mean $r_\pi^i$ such that $s^i = s$, and the expectation is over possible trajectories $\tau = (s_1, s_2, \cdots)$.

Under function approximation it may not be possible to find a fixed point of $T_{\pi}$ and find the true value function.
Instead we can consider the projected fixed point,
\begin{align}\label{eq:proj-fixed-point}
    z_{\ast} = \Pi_{\Xi}(T_{\pi}(z_\ast)).
\end{align}

Where the projection is on to the set of realizable value functions $\mathcal{Z} = \{v_\theta: \theta \in \reals^d\}$. 
We use the notation $v_{\theta}$ in place of $g(\theta)$ to be consistent with the RL literature, the prediction for state $s^i$ is $v_{\theta}(s^i) = g(\theta)^i$. 
$\Xi$ is the diagonal matrix with entries corresponding to the stationary distribution $\xi = (\xi^1,\cdots, \xi^n)$ of $P_{\pi}$.

As pointed out by \citet{bertsekas2009projected}, finding the fixed point \eqref{eq:proj-fixed-point} is equivalent to solving a VI of the form \eqref{eq:vi}
with constraint $\mathcal{Z}$ and operator \begin{align}
    F(z) = \Xi(z-T_\pi(z)).\label{eq:pbe-vi}
\end{align}
Additionally, \citet{bertsekas2009projected} showed that $F$ is both smooth and strongly monotone and therefore satisfies our hidden structure Assumption \ref{assumption:vi}.

This fixed point can then be equivalently mapped to the minimum of the projected Bellman error with respect to the parameters, i.e. a TD fixed point. 
The fixed point $z_\ast =v_{\theta_\ast}$ corresponds to finding parameters $\theta_\ast$ such that
\begin{align}\label{eq:pbe-loss-fixed-point}
    \theta_\ast \in \underset{\theta}{\arg\min} \left[ BE(\theta, \theta_\ast) =  \frac{1}{2} \sum_{i=1}^n \xi^i(v_{\theta}(s^i) - (r_\pi(s^i)- \gamma\expect[s'\sim P_\pi]{v_{\theta_\ast}(s')|s^i})^2\right].
\end{align}
Note this is not a scalar minimization problem since the loss is not directly minimized through the next state prediction, doing so would result in minimizing the Bellman error, $\min_{\theta} BE(\theta, \theta)$ and not PBE. 
The fixed points of these objectives need not coincide even with linear function approximation~\citep{sutton2018reinforcement}.

Using the function $BE$ and the stationary condition \eqref{eq:pbe-loss-fixed-point} we can define the following natural gap function
\begin{align}\label{eq:bellman-gap}
    BE_{\text{gap}}(\theta) = BE(\theta, \theta) - \inf_{\theta'} BE(\theta', 
    \theta).
\end{align}
Notice that $BE_{\text{gap}}(\theta) \geq 0 $ for all $\theta$ and is zero if and only if $\theta$ is a fixed point of \eqref{eq:pbe-loss-fixed-point}.

Since $\Pi_{\Xi} \circ T_{\pi}$ is known to be a contraction ~\citep{bertsekas2012dynamic} but is expensive to compute exactly, we can devise a surrogate loss that approximates it. 
The operator $\Pi_{\Xi} \circ T_{\pi}$ is equivalent to the scaled projected gradient method $z_{t+1} = \Pi(z_t - \Xi^{-1} F(z_t))$, where $z_t = v_{\theta_t}$, and is minimum of the surrogate loss
\begin{align}
    \ell_t(\theta) &= \frac{1}{2}\|v_{\theta} -\left( v_{\theta_{t}} - \Xi^{-1} F(v_{\theta_t})\right)\|_{\Xi}^2\\ 
    &= \frac{1}{2}\|v_{\theta} - T_{\pi}(v_{\theta_t})\|_{\Xi}^2.
\end{align}
Note that if $\Xi = \id$ than this is the same surrogate loss \eqref{eq:surr} with $\eta=1$.

\subsection{The Linear Case}\label{sec:appendix-linear}

In this section we assume that predicted values are linear, $g(\theta) = \Phi \theta$, with some feature matrix
\[
    \Phi = \begin{bmatrix}
             &\phi^\top(s^1) \\
             &\vdots \\
             & \phi^\top(s^n)
            \end{bmatrix} \in \reals^{n \times d}.
\]
In this case we have that the operator $F$ is also linear in $\theta$
\[
    F(z) = F(\Phi \theta) = \Xi(\Phi \theta - \gamma P_\pi \Phi \theta  - r_\pi) = \Xi(\Phi - \gamma P_\pi \Phi) \theta  - \Xi r_\pi. 
\]
Since $g$ linear, there exists a preconditioning scheme that can generate a sequence of parameters $\{\theta_t\}_{t\in\nnumbers}$ such that $\{z_{t+1} = \Phi \theta_{t+1} =\Pi_{\Xi}(T_\pi(z_t)) \}$ and therefore is guaranteed to converge linearly.
This preconditioning scheme is simply the minimizer of the weighted least-squares surrogate loss $\ell_t(\theta)=\nicefrac{1}{2}\|\Phi \theta -T_{\pi}(\theta_t)\|^2_{\Xi}$,
\[
    \theta_{t+1} = \theta_t - (\Phi^\top \Xi \Phi)^{-1} \Phi^\top F(z_t). 
\]
If there is no unique minimizer then the pseudo-inverse $(\Phi^\top \Xi \Phi)^{\dag}$ can be used.
To approximate the iteration \citet{bertsekas2009projected} suggests using estimators that depend only on the history states $(s_1, s_2, \cdots,s_t, s_{t+1})$ and rewards $(r_1, \cdots, r_t)$ to approximate the conditioning matrix $\Phi^\top \Xi \Phi$ and gradient $\Phi^\top F(z_t)$.
The following estimators are proposed:
\begin{itemize}
    \item $\hat{D}_t = \frac{1}{t}\sum_{i=1}^t \phi(s_i)\phi^\top(s_i)$
    \item $\hat{C}_t = \frac{1}{t}\sum_{i=1}^t \phi(s_i)(\phi(s_i) - \gamma \phi(s_{i+1}))^\top$
    \item $\hat{r}_t=\frac{1}{t}\sum_{i=1}^t \phi(s_i)r_i$.
\end{itemize}
If the chain is fully mixed then the sampled states are drawn according to $\xi$ making the above estimators unbiased: 
\begin{align*}
    &\Phi^\top \Xi \Phi = \sum_i^n \xi^i \phi(s^i)\phi(s^i)^\top = \expect{\hat{D}_t},\\
    &\Phi^\top F(z) =  \sum_{i} \xi^i \phi(s^i)(\phi(s^i)^\top \theta  - \sum_{s'}(P_\pi)_{s^i,s} \phi(s)^\top \theta -r_{\pi}(s^i)) = \expect{\hat{C}_t \theta - \hat{r}_t}.
\end{align*}
Where $(P_\pi)_{s,s'}$ is the transition probability to state $s'$ when in state $s$.
Therefore the iteration
\[
\theta_{t+1} = \theta_t - (\hat{D}_t)^{-1}(\hat{C}_t\theta_t -\hat{r}_t), 
\]
eventually converges to exact deterministic iteration.
Interestingly, the stochastic version can be shown to be the minimmum of the surrogate
 \[
 \tilde{\ell}_t(\theta) = \frac{1}{2}\|\Phi \theta - \hat{T}_{\pi}(z_t)\|^2_{\hat{\Xi}}.
 \] 
 Where $\hat{T}_{\pi}(z) = \bar{r} + \gamma \bar{P}_\pi(z)$, with $\Xi$, $r$, and $P_{\pi}$ being estimated with the empirical distribution.
 More precisely, denoting the number of times that a state $s$ is visited in a trajectory of length $t$ as $n(s)$, and the number of times $s'$ is visited after $s$ as $n(s,s')$, then the empirical distribution over states is  $\hat{\xi}= (\nicefrac{n(s^1)}{t},\cdots, n(s^n)/t)$ and $\hat{\Xi}$ is the diagonal matrix with diagonal $\hat{\xi}$. 
 Similarly, the transition matrix is estimated by $(\bar{P}_{\pi})_{s,s'} = n(s,s')/n(s)$ if $n(s) > 0$, and arbitrary otherwise.
 For convenience, we write $\hat{\xi}(s^i) = \hat{\xi}^i$.

\begin{proposition}\label{thm:bert-surr}
    The gradient of the surrogate loss $\tilde{\ell}_t(\theta)= \frac{1}{2}\|\Phi \theta - \hat{T}_{\pi}(z_t)\|^2_{\hat{\Xi}}$ is
    \[
        \nabla \tilde{\ell}_t(\theta) = \hat{C}_t\theta_t -\hat{r} + \hat{D}_t(\theta - \theta_t).
    \]
    If $\hat{D}_t$ is invertible then its minimum is
    \[
        \theta_{t+1} = \theta_t - (\hat{D}_t)^{-1}(\hat{C}_t \theta_t -\hat{r}_t).
    \]
\end{proposition}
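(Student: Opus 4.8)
The plan is to differentiate the weighted least-squares objective directly, recognize each of the three empirical estimators $\hat{D}_t,\hat{C}_t,\hat{r}_t$ as an explicit matrix product built from $\Phi$, $\hat{\Xi}$, $\bar{P}_\pi$ and $\bar{r}$, and then solve the first-order optimality condition. First I would use that $\tilde{\ell}_t$ is a squared norm in the $\hat{\Xi}$ inner product, so that
\[
  \nabla\tilde{\ell}_t(\theta) = \Phi^\top\hat{\Xi}\big(\Phi\theta - \hat{T}_\pi(z_t)\big).
\]
Expanding $\hat{T}_\pi(z_t) = \bar{r} + \gamma\bar{P}_\pi\Phi\theta_t$ (using $z_t = \Phi\theta_t$) turns this into $\Phi^\top\hat{\Xi}\Phi\,\theta - \Phi^\top\hat{\Xi}\bar{r} - \gamma\Phi^\top\hat{\Xi}\bar{P}_\pi\Phi\,\theta_t$.

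The crux is a counting argument converting trajectory sums $\sum_{i=1}^t$ into state-indexed sums weighted by the empirical distribution $\hat{\xi}^j = n(s^j)/t$, with $\hat{\Xi} = \diag(\hat{\xi})$. Since the distinct state $s^j$ is visited $n(s^j)$ times along $s_1,\dots,s_t$, I would show $\Phi^\top\hat{\Xi}\Phi = \frac{1}{t}\sum_j n(s^j)\phi(s^j)\phi^\top(s^j) = \hat{D}_t$, and by grouping the reward sum by visited state $\Phi^\top\hat{\Xi}\bar{r} = \hat{r}_t$. The delicate identity is the cross term, where the per-state normalization of $\bar{P}_\pi$ cancels $\hat{\xi}$: since $\hat{\xi}^j(\bar{P}_\pi)_{s^j,s^m} = \frac{n(s^j)}{t}\cdot\frac{n(s^j,s^m)}{n(s^j)} = \frac{n(s^j,s^m)}{t}$, one gets $\gamma\Phi^\top\hat{\Xi}\bar{P}_\pi\Phi = \frac{\gamma}{t}\sum_{i=1}^t\phi(s_i)\phi^\top(s_{i+1}) = \hat{D}_t - \hat{C}_t$, the last equality being just the definition of $\hat{C}_t$.

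Substituting these three identities gives
\[
  \nabla\tilde{\ell}_t(\theta) = \hat{D}_t\theta - \hat{r}_t - (\hat{D}_t - \hat{C}_t)\theta_t = \hat{C}_t\theta_t - \hat{r}_t + \hat{D}_t(\theta - \theta_t),
\]
which is the claimed gradient. Finally, because $\hat{\Xi}\succeq 0$ the loss is convex, and strictly convex precisely when $\hat{D}_t = \Phi^\top\hat{\Xi}\Phi$ is invertible; setting the gradient to zero and left-multiplying by $\hat{D}_t^{-1}$ then yields the unique minimizer $\theta_{t+1} = \theta_t - \hat{D}_t^{-1}(\hat{C}_t\theta_t - \hat{r}_t)$. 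I expect the only real obstacle to be the bookkeeping in the cross term, namely correctly translating the empirical transition matrix and empirical state distribution into the trajectory transition counts $n(s^j,s^m)$ and verifying the cancellation of the normalization; everything else is routine differentiation of a quadratic.
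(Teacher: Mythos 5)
Your proposal is correct and follows essentially the same route as the paper's proof: both compute the gradient of the weighted quadratic as $\Phi^\top\hat{\Xi}(\Phi\theta - \hat{T}_\pi(z_t))$, identify $\hat{D}_t = \Phi^\top\hat{\Xi}\Phi$, $\hat{r}_t = \Phi^\top\hat{\Xi}\bar{r}_\pi$, and the cross term $\frac{\gamma}{t}\sum_i\phi(s_i)\phi(s_{i+1})^\top = \gamma\Phi^\top\hat{\Xi}\bar{P}_\pi\Phi$ via the same cancellation $\hat{\xi}(s)(\bar{P}_\pi)_{s,s'} = n(s,s')/t$, and then rearrange and set the gradient to zero. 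Your added remark that invertibility of $\hat{D}_t$ gives strict convexity (hence uniqueness of the minimizer) is a small but harmless strengthening of the paper's final step.
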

\begin{proof}
    By definition of $\hat{\Xi}$, we have that $\hat{D}_t = \frac{1}{t}\sum_i \phi(s_i)\phi(s_i)^\top = \Phi^\top \hat{\Xi} \Phi$.
    Similarly, we have that $\hat{C}_t = \frac{1}{t}\sum_{i=1}^t \phi(s_i)(\phi(s_i) - \gamma \phi(s_{i+1}))^\top = \Phi^\top \hat{\Xi} \Phi - \gamma\frac{1}{t}\sum_{i=1}^t\phi(s_i)\phi(s_{i+1})^\top$.
    Where
    \begin{align}
        \frac{1}{t}\sum_{i=1}^t\phi(s_i)\phi(s_{i+1})^\top &= \frac{1}{t} \sum_{s,s'}n(s,s')\phi(s)\phi(s')^\top\\
        &= \frac{1}{t} \sum_{s}\phi(s)\sum_{s'} n(s,s') \phi(s')^\top\\
        &= \frac{1}{t} \sum_{s}n(s)\phi(s)\sum_{s'} \frac{n(s,s')}{n(s)} \phi(s')^\top\\
        &=  \frac{1}{t} \sum_{s}n(s)\phi(s)\sum_{s'} (\bar{P}_{\pi})_{s,s'}\phi(s')^\top\\
        &= \sum_s \hat{\xi}(s)\phi(s)\sum_{s'} (\bar{P}_{\pi})_{s,s'}\phi(s')^\top\\
        &= \Phi^\top \hat{\Xi}\bar{P}_\pi \Phi. 
    \end{align}
    Therefore $\hat{C}_t = \Phi^\top \hat{\Xi}(\Phi -\gamma \bar{P}_\pi \Phi)$.
    Similarly, $\hat{r}_t = \frac{1}{t}\sum_{i=1}^t \phi(s_i)r_i = \Phi^\top\Xi \bar{r}_\pi$.

    Now consider the weighted least squares loss
    \[
    \tilde{\ell}_t(\theta) = \frac{1}{2}\|\Phi \theta - \hat{T}_\pi(\Phi \theta_t)\|^2_{\hat{\Xi}}.
    \]
    The gradient of $\ell_t$ is
    \begin{align}
    \nabla \ell_t(\theta) &= \Phi^\top\hat{\Xi}(\Phi \theta - \hat{T}_{\pi}(\Phi \theta_t)) = \Phi^\top\hat{\Xi}(\Phi \theta - \gamma \bar{P}_{\pi}\Phi \theta_t  - \bar{r}_\pi)\\
    &=\Phi^\top\hat{\Xi}(\Phi \theta -\Phi \theta_t +\Phi \theta_t - \gamma \bar{P}_{\pi}\Phi \theta_t  - \bar{r}_\pi) \\ 
    &= \Phi^\top\hat{\Xi}(\Phi \theta -\Phi \theta_t) + \Phi^\top\hat{\Xi}(\Phi \theta_t - \gamma \bar{P}_{\pi}\Phi \theta_t  - \bar{r}_\pi)\\
    &= \hat{D}_t(\theta -\theta_t) + \hat{C}_t\theta_t -\hat{r}_t
   \end{align}
Setting the gradient to zero gives the minimum of $\tilde{\ell}_t$. 
\end{proof}

\subsection{The Nonlinear Case}
\begin{figure}
    \centering
    \includegraphics[width=\linewidth, scale=0.6]{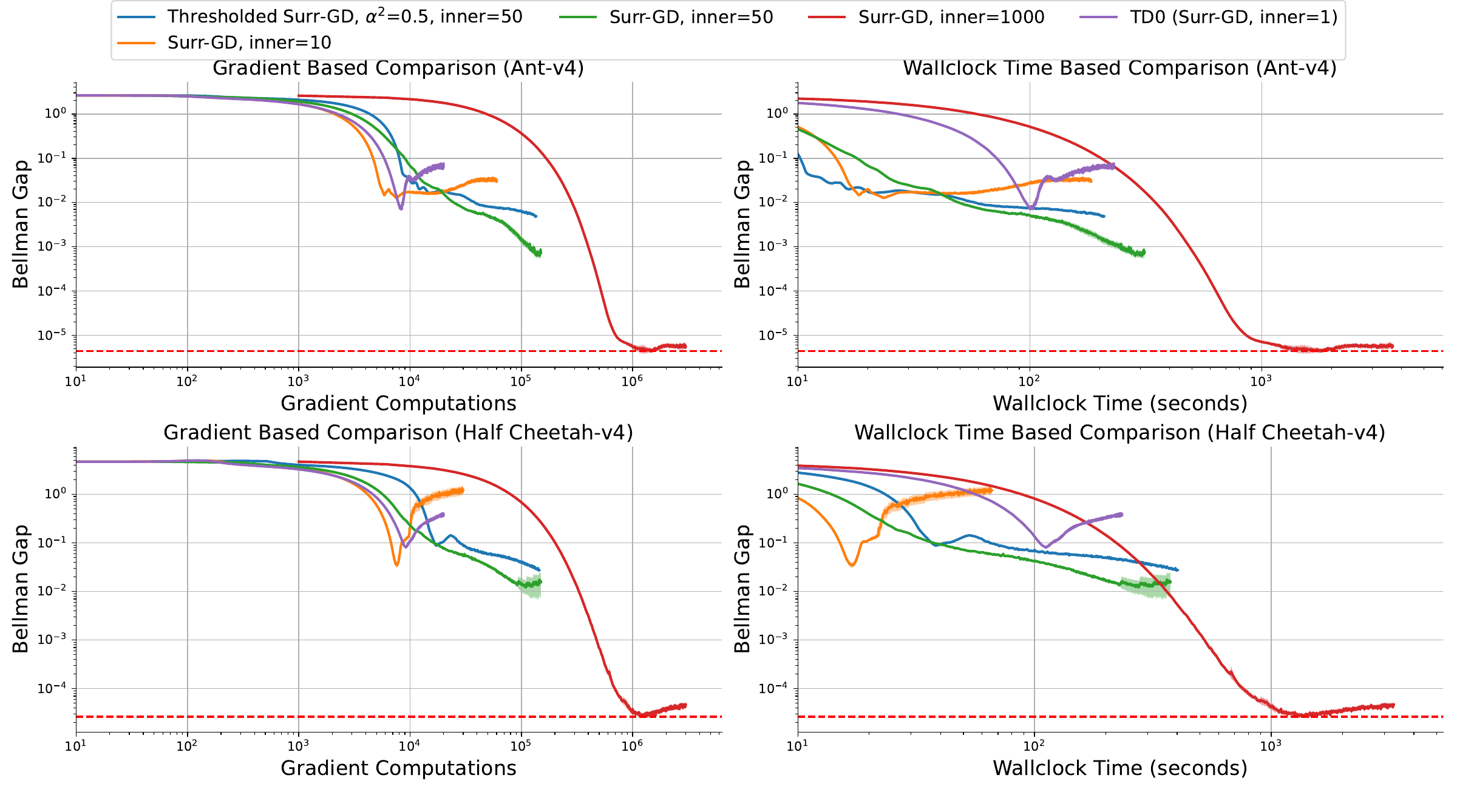}
    \caption{\small Comparison of Bellman gap during training of TD(0) and surrogate methods in the offline full-batch setting with nonlinear function approximation in Ant (top) and HalfCheetah (bottom) environments, measured by gradient computations (left) and wallclock time (right). 
    The average Bellman gap \eqref{eq:approx-bellman-gap} across 20 runs along with 95\% confidence intervals are computed from a fixed training set. The red dashed line represents the lowest Bellman gap achieved by any of the algorithms. 
    }
    \label{fig:nonlinear_fullbatch_BG}
\end{figure}

\begin{figure}
    \centering
    \includegraphics[width=\linewidth, scale=0.6]{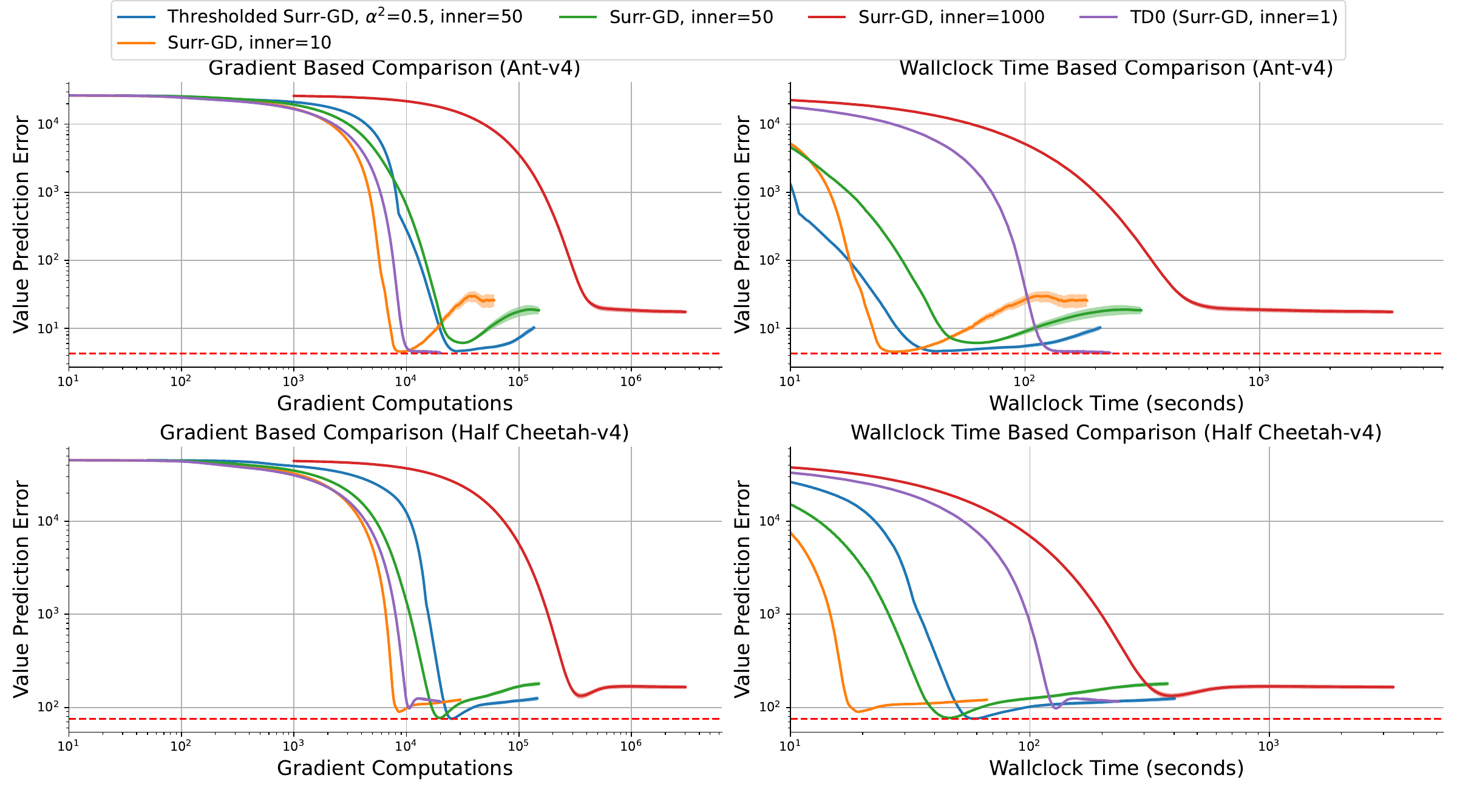}
    \caption{\small Comparison of average performance of TD(0) and surrogate methods in minimizing the value prediction error in the offline full-batch setting with nonlinear function approximation in Ant (top) and HalfCheetah (bottom) environments, measured by gradient computations (left) and wallclock time (right).
    The average value prediction error across 20 runs along with 95\% confidence intervals are computed from a fixed test set.
    The red dashed line represents the lowest value prediction error achieved by any of the algorithms.
    }
\label{fig:nonlinear_fullbatch_VE}
\end{figure}

In this section, we detail the surrogate algorithms used for the nonlinear setting, as outlined in Algorithms \ref{alg:surrogate_inner}, \ref{alg:double_sampling} and \ref{alg:thresholded_surr}.
As a complement to Section \ref{sec:experiments_PBE}, we compare the convergence properties of TD(0) with the surrogate methods under a an offline, full-batch setting, where the training data is pre-collected from the reference policy.
This experiment provides insight on two important fronts: how well can the methods solve the hidden monotone VI problem, and how the performance of the different algorithms are affected when there is no environment interaction.   

In our deep learning/nonlinear setting, the state space is too large to verify if we have solved the VI \eqref{eq:pbe-vi} by the $BE_{\text{gap}}$\eqref{eq:bellman-gap} or other metrics.
To test the effectiveness of our methods at solving this VI and minimizing PBE we consider an empirical approximation given a fixed batch of data (i.e. offline).
In this setting, $n$ tuples of state, reward and next state are collected $\{(s_t, r_t, s_{t+1})\}_{t=1}^n$ from a fixed reference policy $\pi$.
Using these observations we construct an empirical estimate of $BE$ \eqref{eq:pbe-loss-fixed-point},
\begin{equation}
    \widehat{BE}(\theta, \theta') = \frac{1}{n}\sum_{t=1}^n(v_{\theta}(s_t) - (r_t - \gamma v_{\theta'}(s_{t+1})))^2.
\end{equation}
We can then define the following fixed point problem:
find $\theta_{\ast}$ such that
\begin{equation}\label{eq:approx-pbe-loss-fixed-point}
    \theta_{\ast} \in \underset{\theta}{\arg\min} \, \widehat{BE}(\theta, \theta_\ast).
\end{equation}
Given this new empirical approximation of PBE we can define a gap analogous to \eqref{eq:bellman-gap}, 
\begin{equation}\label{eq:approx-bellman-gap}
    \widehat{BE}_{\text{gap}}(\theta) = \widehat{BE}(\theta, \theta) - \inf_{\theta'}\widehat{BE}(\theta', \theta).
\end{equation}
In practice we approximate the infimum in $\widehat{BE}$ by 500 steps of AdamW, with a learning rate of $1\times10^{-4}$ and exponential annealing at a decay rate of $\gamma = 0.995$.
In the offline setting we consider deterministic algorithms like in the min-max experiments of Section \ref{sec:experiments_minmax}, and therefore we do not  consider the double sampling algorithm (Algorithm \ref{alg:double_sampling}) but only full-batch versions of Algorithm \ref{alg:surrogate_inner} and Algorithm \ref{alg:thresholded_surr}.
We compare different algorithms with $\widehat{BE}_{\text{gap}}$ \eqref{eq:approx-bellman-gap}, giving a measure of how much the fixed point condition \eqref{eq:approx-pbe-loss-fixed-point} is violated. 
Given enough data this fixed point is a proxy for the fixed point \eqref{eq:pbe-loss-fixed-point}.

Since no environment interaction is required during training, we compare the performance of the methods in terms of gradient computations and wall-clock time, with the former being the dominant factor in computational cost, unlike the setting in Section \ref{sec:experiments_PBE} where environment interaction plays a larger role.

As shown in Figure~\ref{fig:nonlinear_fullbatch_BG}, increasing the number of inner loop steps significantly reduces the Bellman gap, suggesting that the model approaches the fixed point more effectively. Notably, the surrogate method with $10$ inner steps is more efficient than TD(0) in terms of gradient computations, while both inner steps = $10$ and $50$ outperform TD(0) in wall-clock time.

As presented in Figure~\ref{fig:nonlinear_fullbatch_VE}, we also evaluate the value prediction error on a test set to further assess the performance of the algorithms in the full batch and deterministic setting. In all cases, at least one variant of the surrogate method converges faster than TD(0), both in terms of gradient computations and wall-clock time. This demonstrates that, with an appropriate choice of inner loop steps and alpha threshold, the surrogate method not only outperforms TD(0) in terms of stability but also in computational efficiency.

\begin{figure}
    \centering
        \vspace{-5mm}
    \includegraphics[width=\linewidth, scale=0.6]{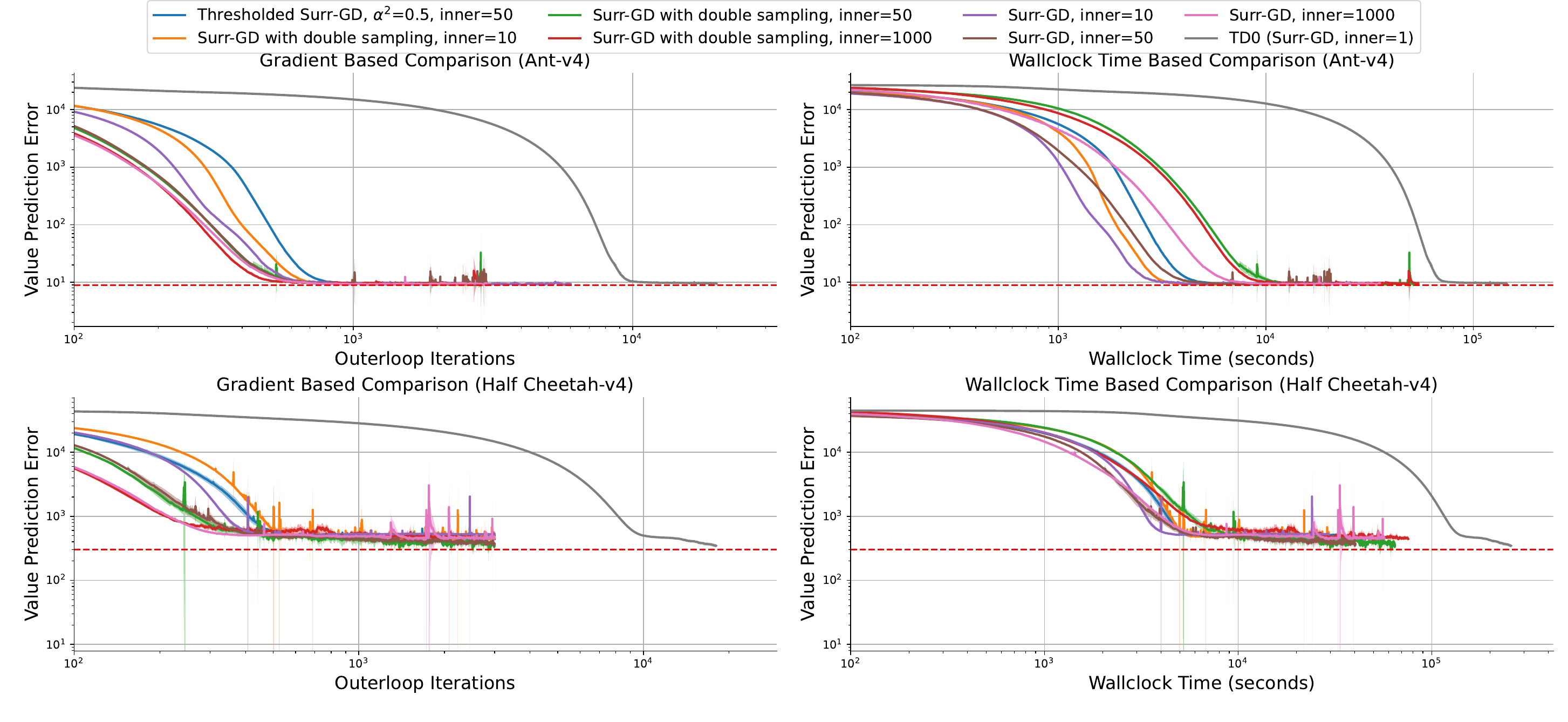}
    \vspace{-7mm}
    \caption{
    \small
    Comparison of average performance of surrogate methods in minimizing the value prediction error for RL tasks with a 16-layer MLP in Ant as measured by outer loop iterations (left) and wallclock time (right).
    The average value prediction error across 20 runs along with 95\% confidence intervals are computed from a fixed test set.
    The red dashed line represents the lowest value prediction error achieved by any of the algorithms.
    }
    \label{fig:large_nonlinear}
        \vspace{-7mm}
\end{figure}
\subsubsection{Larger network}
In Figure \ref{fig:large_nonlinear} we report the average performance of surrogate methods using a 16 layer MLP. We observe similar runtimes and performance to the 2 layer MLP used in Section \ref{sec:non-linear}.

\begin{algorithm}[h]
\caption{Surrogate Method with Inner Loop}\label{alg:surrogate_inner}
\SetAlgoLined
\SetAlgoNoEnd
\KwIn{Reference policy $\pi$, initial value function parameters $\theta_0$, length of trajectories $N$, discount factor $\gamma$, learning rate $\eta$, number of inner steps $M$, number of outer loop iterations T, stop gradient function $\texttt{sg}(\cdot)$}

\For{$t = 0 \,\KwTo\, T$}{
    Collect a batch of trajectories $\{(s_i, a_i, r_i, s'_i)\}_{i=1}^N$ from $\pi$\\
    \For{each $(s_i, r_i, s'_i)$ in the batch}{
        Compute TD target: $y_i = r_i + \gamma \texttt{sg}(V_{\theta_t}(s'_i))$
    }
    Initialize inner loop $\theta$: $\theta_t^{(0)} = \theta_t$\\
    \For{$m = 0 \,\KwTo\, M-1$}{
        \For{each $(s_i, r_i, s'_i)$ in the batch}{
            Compute TD error: $\delta_i^{(m)} = V_{\theta_t^{(m)}}(s_i) - y_i$
        }
        Compute mean squared error: $\frac{1}{N}\sum_{i=1}^{N}(\delta_i^{(m)})^2$\\
        Update inner loop $\theta$: $\theta_t^{(m+1)} \leftarrow \theta_{t}^{(m)} - \eta \frac{1}{N} \sum_{i=1}^{N} \delta_i^{(m)} \nabla_{\theta} V_{\theta_t^{(m)}}(s_i)$
    }
    Update outer loop $\theta$: $\theta_{t+1} = \theta_{t}^{(M)}$
}
\end{algorithm}

\begin{algorithm}
\caption{Surrogate Method with Double Sampling}\label{alg:double_sampling}
\SetAlgoLined
\SetAlgoNoEnd
\KwIn{Reference policy $\pi$, initial value function parameters $\theta_0$, length of trajectories $N$, buffer size $K$, discount factor $\gamma$, learning rate $\eta$, number of inner steps $M$, number of outer loop iterations T, stop gradient function $\texttt{sg}(\cdot)$}
Collect $K$ batches of trajectories $\{(s_i, a_i, r_i, s'_i)\}_{i=1}^N$ from $\pi$ to construct a buffer\\
\For{$t = 0 \,\KwTo\, T$}{
    Collect a batch of trajectories $\{(s_i, a_i, r_i, s'_i)\}_{i=1}^N$ from $\pi$\\
    \For{each $(s_i, r_i, s'_i)$ in the batch}{
        Compute TD target: $y_i = r_i + \gamma \texttt{sg}(V_{\theta_t}(s'_i))$\\
        Compute TD error: $F(V_{\theta_t}(s_i)) = \texttt{sg}(V_{\theta_t}(s_i)) - y_i$
    }
    Initialize inner loop $\theta$: $\theta_t^{(0)} = \theta_t$\\
    \For{$m = 0 \,\KwTo\, M-1$}{
        Sample $m \mod K$-th batch $\{(s_j,a_j,r_j,s'_j)\}_{j=1}^N$ from the buffer\\
        \For{each $(s_i, r_i, s'_i)$ in the new batch}{
            Compute linearization term $l_1(\theta_t^{(m)}) = \langle F(V_{\theta_t}(s_i)), V_{\theta_{t}^{(m)}}(s_i) - \texttt{sg}(V_{\theta_{t}}(s_i)) \rangle$
        }
        \For{each $(s_j, r_j, s'_j)$ in the buffer batch}{
            Compute regularization term $l_2(\theta_t^{(m)}) = \frac{1}{2}\lVert V_{\theta_{t}^{(m)}}(s_j) - \texttt{sg}(V_{\theta_{t}}(s_j))\rVert_2^2$
        }
        Construct the surrogate loss function $L(\theta_t^{(m)}) = \frac{1}{N}(l_1(\theta_t^{(m)}) + l_2(\theta_t^{(m)}))$\\
        Update inner loop $\theta_{t}^{(m)}$ by minimizing the surrogate loss: $\theta_{t}^{(m+1)} \leftarrow \theta_{t}^{(m)} - \eta \nabla_{\theta}L(\theta_t^{(m)})$
    }
    Update outer loop $\theta$: $\theta_{t+1} = \theta_{t}^{(M)}$
}
\end{algorithm}

\begin{algorithm}[H]
\caption{Thresholded Surrogate Method}\label{alg:thresholded_surr}
\SetAlgoLined
\SetAlgoNoEnd
\KwIn{Reference policy $\pi$, initial value function parameters $\theta_0$, length of trajectories $N$, discount factor $\gamma$, learning rate $\eta$, number of inner steps $M$, number of outer loop iterations T, stop gradient function $\texttt{sg}(\cdot)$, $\alpha^2\in(0,1)$}
\For{$t = 0 \,\KwTo\, T$}{
    Collect a batch of trajectories $\{(s_i, a_i, r_i, s'_i)\}_{i=1}^N$ from $\pi$\\
    \For{each $(s_i, r_i, s'_i)$ in the batch}{
        Compute TD target: $y_i = r_i + \gamma \texttt{sg}(V_{\theta_t}(s'_i))$
    }
    Initialize inner loop $\theta$: $\theta_t^{(0)} = \theta_t$\\
    Compute initial surrogate loss: $l_t(\theta_t)=\frac{1}{N}\sum_{i=1}^{N}\lVert V_{\theta_t}(s_i)-y_i\rVert_2^2$\\
    \While{$l_t(\theta_t^{(m)})/l_t(\theta_t)\geq\alpha^2$ and $m\leq M$}{
        \For{each $(s_i, r_i, s'_i)$ in the batch}{
            Compute TD error: $\delta_i^{(m)} = V_{\theta_t^{(m)}}(s_i) - y_i$
        }
        Compute mean squared error: $\frac{1}{N}\sum_{i=1}^{N}(\delta_i^{(m)})^2$\\
        Update inner loop $\theta$: $\theta_t^{(m+1)} \leftarrow \theta_{t}^{(m)} - \eta \frac{1}{N} \sum_{i=1}^{N} \delta_i^{(m)} \nabla_{\theta} V_{\theta_t^{(m)}}(s_i)$\\
        Update $m: m\leftarrow m+1$
    }
    Update outer loop $\theta$: $\theta_{t+1} = \theta_{t}^{(M)}$
}
\end{algorithm}

\section{Applications to Performative Prediction}

Minimizing Projected Bellman error can be viewed as a special case of finding a stable point in performative prediction~\citep{perdomo2020performative}.
In performative prediction or more generally optimization under decision dependent distributions~\citep{drusvyatskiy2023stochastic}, the model predictions $z=g(\theta)$ influence the targets $y$.
With moving targets it is of interest to find model predictions that are performatively stable; such models are optimal under the shifting targets and are stable under repeated applications of empirical risk minimization.

For simplicity, we stay within the finite dimensional case where a model's prediction function is given by the $n$-dimensional vector, $z = g(\theta) \in \reals^n$.
Where each prediction is $z^i = g^i(\theta) = h(x_i, \theta)$, for some feature vector $x_i$ and fixed model architecture $h$.
Given an error function $\ell(z,y)$, quantifying error between predictions $z \in \reals^n$ and targets $y \in \reals^n$, the performatively stable models satisfy
\begin{align}\label{eq:pp-stable}
    \theta_\ast \in \underset{\theta}{\arg \min} \, \expect[y\sim D(g(\theta_\ast))]{\ell(g(\theta),y)}. 
\end{align}
The prediction dependent shift in targets is represented by the distribution $D(\cdot)$ being a function of the model predictions $g(\theta)$.
Note that we have used the formulation of performative prediction with respect to model predictions~\citep{mofakhami2023performative} instead of parameters.

The problem of minimizng projected Bellman error is a special case of \eqref{eq:pp-stable} with $\ell(z,y) = ||z-y||^2_{\Xi}$ and $D(z)$ is the distribution with full weight on $T_{\pi}(z)$. 

The problem of finding a performatively stable point can more generally be expressed as the following decision-dependent optimization problem: 
\begin{align}
    \theta_\ast \in \underset{\theta }{\arg \min } f_{g(\theta_\ast)}(g(\theta)).
\end{align}
Or equivalently a constrained optimization problem with constraint $\mathcal{Z} = \closure\{g(\theta) : \theta \in \reals^d\}$:
\begin{align*}
    \theta_\ast &\in \underset{\theta }{\arg \min } f_{g(\theta_\ast)}(g(\theta)) \Leftrightarrow z_\ast \in \underset{ z \in \mathcal{Z}}{\arg \min } f_{z_\ast}(z).
\end{align*}
If $Z$ is closed convex and $f_{z'}(z)$ is convex and differentiable with respect to $z$, then the above constrainted formulation is equivalent to the fixed point problem:
\begin{align}\label{eq:pp-fixed-point}
    z_{\ast} = \Pi \circ T (z_\ast) = \Pi(z_\ast -\eta \nabla f_{z_\ast}(z_\ast)),
\end{align}
where we denote the gradient of $f_{z'}(z)$ with respect to $z$ as simply $\nabla f_{z'}(z)$.
As mentioned in Section \ref{sec:appendix-PBE}, \citet{bertsekas2009projected} showed that solving \eqref{eq:pp-fixed-point} is equivalent to solving a strongly monotone and smooth VI if either $T$ or $\Pi \circ T$ is a contraction.
Therefore, under these contraction conditions \eqref{eq:pp-fixed-point} is a hidden strongly monotone and smooth VI problem of the form \eqref{eq:vi}! 
Thankfully, \citep{drusvyatskiy2023stochastic, cheng2020online} showed that $\Pi \circ T$ is a contraction if $f_{z'}(z)$ is $\mu$ strongly convex with respect to $z$ and $\nabla f_{z'}(z)$ is $\beta$-Lipschitz with respect to $z'$ and if $\nicefrac{\beta}{\mu} < 1$.
Therefore, under standard performative prediction assumptions such as those found in \citet{drusvyatskiy2023stochastic}, the performatively stable point \eqref{eq:pp-stable} is equivalent to solving a hidden strongly monotone and smooth VI problem where our surrogate loss approach (Algorithm \eqref{alg:surr}) applies and converges to the stable point under Theorems (\ref{thm:det},\ref{thm:stochastic},\ref{thm:contraction-error-limit}). 

\end{document}